\newtheorem{observation}{Observation}
\def\eqref#1{equation~\ref{#1}}
\def\1{\bm{1}}
\def\eps{{\epsilon}}
\def\vg{{\bm{g}}}
\DeclareMathAlphabet{\mathsfit}{\encodingdefault}{\sfdefault}{m}{sl}
\SetMathAlphabet{\mathsfit}{bold}{\encodingdefault}{\sfdefault}{bx}{n}
\def\gG{{\mathcal{G}}}
\def\gR{{\mathcal{R}}}
\def\gS{{\mathcal{S}}}
\def\gT{{\mathcal{T}}}
\def\gX{{\mathcal{X}}}
\def\gY{{\mathcal{Y}}}
\def\sI{{\mathbb{I}}}
\def\sN{{\mathbb{N}}}
\def\sR{{\mathbb{R}}}
\newcommand{\E}{\mathbb{E}}
\DeclarePairedDelimiter\abs{\lvert}{\rvert}%
\DeclareMathOperator*{\argmax}{arg\,max}
\DeclareMathOperator*{\argmin}{arg\,min}
\newcommand{\cX}{\mathcal{X}}
\newcommand{\cY}{\mathcal{Y}}
\newcommand{\cR}{\mathcal{R}}
\newcommand{\cZ}{\mathcal{Z}}
\newcommand{\cD}{\mathcal{D}}
\newcommand{\cG}{\mathcal{G}}
\newcommand{\cT}{\mathcal{T}}
\newcommand{\ie}{{i.e.,~\xspace}}
\newtheorem{theorem}{Theorem}[section]
\newtheorem{lemma}[theorem]{Lemma}
\newtheorem{remark}[theorem]{Remark}
\newtheorem{corollary}[theorem]{Corollary}
\newtheorem{proposition}[theorem]{Proposition}
\newtheorem{assumption}[theorem]{Assumption}
\newtheorem{definition}[theorem]{Definition}
\title{Discretization-free Multicalibration through Loss Minimization over Tree Ensembles}
\begin{document}

\author{%
    Hongyi Henry Jin\thanks{Part of the work was done during a visit at CMU.}\\
    Tsinghua University\\
    Beijing, China\\
    \And
    Zijun Ding\\
    Carnegie Mellon University\\
    Pittsburgh, PA, USA\\
    \AND
    Dung Daniel Ngo\thanks{Work was done during a visit at CMU.}\\
    J.P. Morgan Chase AI Research\\
    New York, NY, USA\\
    \And
    Zhiwei Steven Wu\\
    Carnegie Mellon University\\
    Pittsburgh, PA, USA\\
}
\maketitle

\begin{abstract}
In recent years, multicalibration has emerged as a desirable learning objective for ensuring that a predictor is calibrated across a rich collection of overlapping subpopulations. Existing approaches typically achieve multicalibration by discretizing the predictor's output space and iteratively adjusting its output values. However, this discretization approach departs from the standard empirical risk minimization (ERM) pipeline, introduces rounding error and additional sensitive hyperparameter, and may distort the predictor’s outputs in ways that hinder downstream decision-making.

In this work, we propose a discretization-free multicalibration method that directly optimizes an empirical risk objective over an ensemble of depth-two decision trees. Our ERM approach can be implemented using off-the-shelf tree ensemble learning methods such as LightGBM. Our algorithm provably achieves multicalibration, provided that the data distribution satisfies a technical condition we term as loss saturation. Across multiple datasets, our empirical evaluation shows that this condition is always met in practice. Our discretization-free algorithm consistently matches or outperforms existing multicalibration approaches---even when evaluated using a discretization-based multicalibration metric that shares its discretization granularity with the baselines.

\end{abstract}

\section{Introduction}
\label{sec:intro}
In many applications, machine learning predictors are at the heart of the decision-making process, from finance \citep{fuster2022predictably} to healthcare diagnostics \citep{rajkomar2018ensuring}. Despite its ubiquity, there is growing concern that these predictors might discriminate against individuals in protected groups. In recent years, \emph{multicalibration} \citep{pmlr-v80-hebert-johnson18a} has emerged from the algorithmic fairness literature as a learning objective to mitigate the risk of algorithmic discrimination. Informally, multicalibration requires a predictor to be calibrated on average over a family of groups $\gG$: for all groups $g \in \gG$ and for all $v$ in the range of $f$,
\begin{equation*}
    \E[(y - f(x)) \cdot g(x) | f(x)=v] = 0
\end{equation*}
where $g\colon \cX \rightarrow \{0, 1\}$ is a group indicator function.

Typically, multicalibration algorithms discretize the output space of the predictor so that the range of $f$, $R(f)$, is a finite set.
Existing work in the multicalibration literature \citep{pmlr-v80-hebert-johnson18a,pmlr-v202-globus-harris23a,Game_for_MC} heavily relies on the discretized output space or \textit{level sets}, and all existing algorithms use calibration data to iteratively remap the output of a predictor $f$ within these level sets to reduce the multicalibration error at each iteration. However, discretization can be undesirable in practice for several reasons. First, it introduces rounding error that can distort predictions and degrade accuracy. Second, it adds a sensitive hyperparameter---the discretization granularity---that must be carefully tuned. As noted by \citet{Nixon_2019_Calibration}, this induces a bias-variance tradeoff: finer discretization improves resolution but requires more data per bin, while coarser discretization sacrifices precision.  \citet{smECE} observe that the calibration error metric can be unstable under different discretization parameters. Finally, when there are multiple downstream decision makers with heterogeneous utility functions, fixed discretization may fail to provide the precision necessary for them to make optimal decisions.

\paragraph{{Our results and contributions.}} In a nutshell, we develop a simple, practical, and performant discretization-free algorithm for multicalibration. In more details:
\begin{itemize}
    \item Given a black-box predictor, our discretization-free algorithm post-processes its outputs by solving a square-loss empirical risk minimization (ERM) problem over an ensemble of depth-two decision trees. Each tree splits on features derived from the predictor’s output or group membership. This ERM step can be efficiently implemented using standard tree ensemble methods such as LightGBM~\citep{LightGBM}.

    \item We prove theoretically that our algorithm outputs a multicalibrated predictor, given the assumption that decision tree ensembles saturate in loss improvement after optimization--that is, the square loss cannot be further reduced through another round of tree-ensemble post-processing. Empirically, we observe that this saturation condition consistently holds across multiple real-world datasets.

\item We evaluate our discretization-free approach across a diverse set of tabular, image, and text datasets. Compared to existing multicalibration methods, our algorithm delivers competitive and often lower multicalibration error—even when baselines are tuned baselines are tuned using the same discretization scheme used for evaluation.
\end{itemize}

\section{Related works}
\label{sec:related-work}
\paragraph{Multicalibration} Multicalibration was first introduced by \citet{pmlr-v80-hebert-johnson18a} as a notion of multi-group fairness. This growing line of research is then extended in several directions, including generalizing or relaxing the notion of multicalibration \citep{pmlr-v178-gopalan22a-low-degree,GeneralizedMC, wu2024bridging, Deng2023HappyMapA}, applying multicalibration to conformal prediction \citep{jung2022batchmultivalidconformalprediction}, as well as exploring mathematical implications of multicalibration \citep{jung2022batchmultivalidconformalprediction, gopalan2021omnipredictors, pmlr-v202-globus-harris23a, wu2024bridging, kim2022universal}. 

\paragraph{Algorithms for multicalibration.} 
 All existing multicalibration algorithms are discretization-based and operate across level sets of the predictor. \cite{pmlr-v80-hebert-johnson18a,pmlr-v202-globus-harris23a} audit multicalibration error of the  predictor within the level sets in each iteration and patch identified bias until convergence. \cite{Game_for_MC} views multicalibration as a multi-objective optimization problem solved using game dynamics, where the groups and the level sets are generalized to distributions and loss functions in general.

\paragraph{Multicalibration and Loss Minimization.} 
A key appeal of multicalibration is its alignment with loss minimization: existing multicalibration algorithms iteratively update the predictor with each update reducing the square loss. This naturally raises the question---can multicalibration be achieved directly via a single loss minimization step? \citet{błasiok2023lossminimizationyieldsmulticalibration} show that minimizing population loss over a class of neural networks of certain size yields multicalibration for groups defined by smaller networks. However, their result is non-constructive and assumes optimization oracle over an intractable function class. In contrast, our approach solves a one-shot ERM over a simple class of tree ensembles, implementable via standard tools. Relatedly, \citet{hansen2024multicalibrationpostprocessingnecessary} ask whether ERM alone suffices. Our experiments show that it often does not---but our post-processing reliably recovers multicalibration.

\section{Preliminaries}
\label{sec:prelim}
We consider prediction tasks over a domain $\cZ = \cX \times \cY$, where $\cX$ represents the feature domain and $\cY$ represents the label domain. We require the labels to be real values in $[0,1]$ -- that is, the label can either be a binary outcome for classification with $\cY = \{0,1\}$ or a real value for regression with $\cY = [0,1]$. For a domain $\cZ$, we write $\cD \in \Delta(\cZ)$ to denote the true distribution over the labeled samples. 

A predictor $f$ is a map $f: \cX \to [0,1]$. We write $\cR(f)$ to denote the range of a predictor $f$. We are interested in the squared error of a predictor $f$ with respect to the underlying distribution $\cD$.  
\begin{definition}[Squared loss]
The squared loss of a predictor $f$ on distribution $\cD$ is
\begin{equation}
    \ell(f, \cD) = \E_{(x, y) \sim \cD} [(f(x) - y)^2]
\end{equation}
We omit the distributions $\cD$ from the notation when it is clear from context. 
\label{def:brier-score}
\end{definition}
Note that when $\cY=[0,1]$, the squared loss is the familiar MSE loss function, whose application in multicalibration has been discussed in \citet{pmlr-v202-globus-harris23a}. When $\cY=\{0,1\}$, the squared loss is the Brier score, which is commonly used for multicalibration algorithms \citep{hansen2024multicalibrationpostprocessingnecessary,błasiok2023lossminimizationyieldsmulticalibration}.

We formally specify the multicalibration notion used in this work. Given the underlying distribution $\cD$ and a set of groups ${\cG}\subset 2^{\cal{X}}$, we write $\vg:\gX\to\{0,1\}^{|G|}$ as the high-dimensional group indicator function, where $g_i(x)=1$ indicates that the datapoint $x \in \cX$ belongs to the $i$-th group. To measure the multicalibration error, we use the following $\ell_1$ definition \citep{hansen2024multicalibrationpostprocessingnecessary}. Note that the $\ell_1$ version of multicalibration is more interpretable than the $\ell_2$ version, and different versions of multicalibration can be bounded by each other \citep{pmlr-v202-globus-harris23a}.
\begin{definition}[Multicalibration Error]
    Fix a distribution $\cD \in \Delta(\cZ)$ and a predictor $f: \cX \to [0,1]$. We define the multicalibration error of $f$ with respect to $\cD$ and $\cG$ as:
    \begin{equation}
            \max_{i\in[ |G|]}\sum_{v\in \cR(f)}\Pr_{(x,y) \sim \cD}[f(x)=v,g_i(x)=1] \cdot \left|\E_{(x, y) \sim \cD}\left[f(x)-y\mid f(x)=v,g_i(x)=1\right]\right|
        \label{eq:multicalibration}
    \end{equation}
    \label{def:multi}
\end{definition}
We say that a predictor $f$ is $\alpha$-multicalibrated if its multicalibration error is at most $\alpha$. Note that our definitions here and analysis in the next section are population-based, and we leave the finite-sample analysis to \Cref{sec:finite-sample}.

\section{A Discretization-Free Algorithm for Multicalibration}
\label{sec:analysis}
In this section, we describe our proposed algorithm for multicalibration. 
At a high level, we train a decision tree ensemble of depth 2, using the output of a base predictor $f_0$ and the group membership as input features, to minimize the squared loss $\ell$ on the true distribution $\cD$. 

More formally, consider the following subsets of $\cX$ used as the splitting criterion for the decision trees:
\begin{equation}
    \begin{aligned}
    \gS_1(f_0)=\left(\bigcup_{v\in \cR(f_0)\cup\{0\}}\{x\in \gX:f_0(x)\ge v\}\right),\quad 
    \gS_2(\gG) = \left(\bigcup_{i\in [|G|]}\{x\in \gX:g_i(x)= 1\}\right),
    \end{aligned}
    \label{eq:splitting-criteria}
\end{equation}
Each decision tree of depth 2 in the ensemble assigns a real value $c_i: i \in [4]$ to each of its 4 leaves. Formally, the set of decision trees is written as $\gT(f_0,\gG)$, where:
\begin{equation}
    \begin{aligned}
    \gT(f_0,\gG)=
    \Big\{&c_1\cdot \sI\{x\in s_1 \wedge x\in s_2\}+c_2\cdot \sI\{x\in s_1 \wedge x\notin s_2\}+ c_3\cdot \sI\{x\notin s_1 \wedge x\in s_2\} \\
    &+c_4\cdot \sI\{x\notin s_1 \wedge x\notin s_2\} 
    :c_1,c_2,c_3,c_4\in \sR, s_1\in \gS_1(f_0),s_2\in \gS_2(\gG) \Big\} 
\end{aligned}
\label{eq:ensemble}
\end{equation}

To find the optimal set of decision trees $T \subseteq \cT(f, \gG)$, we use a solver to minimizes the squared loss and obtain our post-processed predictor:
\begin{equation}
    p_\gG(f_0)=\argmin_{T\subseteq{\cal T}(f_0,\gG)}\ell\left( f_0+\sum_{t \in T }t,\cD\right)
    \label{eq:function-class}
\end{equation}

\begin{figure}[htb]
    \centering
    \begin{minipage}[c]{0.45\linewidth}
      \centering
      \begin{tikzpicture}[font=\tiny]

    \tikzset{
      decision/.style={
        diamond, draw,  aspect=2,
        inner sep=1pt, align=center
      },
      leaf/.style={
        rectangle, draw, 
        inner sep=1.5pt, align=center
      },
      treenode/.style={ 
        level distance=1cm,
        level 1/.style={sibling distance=1.6cm},
        level 2/.style={sibling distance=2em},
        edge from parent/.style={draw}
      },
      treecontainer/.style={
        draw, fill=gray!10, rounded corners, inner sep=4pt
      }
    }
    
      \begin{scope}[treenode] 
        \node[decision] (root) {$f_0(x) \ge 0.3$?}
          child {
            node[decision] (d1) {$g_1(x)$?}
              child { node[leaf] (l11) {+0.3}   edge from parent node[left ]  {Yes} }
              child { node[leaf] (l12) {-0.13} edge from parent node[right] {No}  }
            edge from parent node[left]   {Yes}
          }
          child {
            node[decision] (d2) {$g_2(x)$?}
              child { node[leaf] (l21) {-0.2} edge from parent node[left ]  {Yes} }
              child { node[leaf] (l22) {+0.12} edge from parent node[right] {No}  }
            edge from parent node[right] {No}
          };
      \end{scope}
    
      \node[anchor=south] (tree1label) at ([yshift=1mm]root.north) {\bfseries \normalsize Tree 1};
    
      \begin{pgfonlayer}{background}
        \node[treecontainer, fit=(tree1label) (l11) (l22)] (t1box) {};
      \end{pgfonlayer}
      
      \node[left=-0.1cm of t1box] (f0) {\normalsize$f_0(x)+$};

      \node[right=-0.1cm of t1box]           (plus1) {\normalsize $+$};
      \node[draw, fill=gray!10,rounded corners,minimum height=1cm,
            right=-0.1cm of plus1] (t2box) {\bfseries \normalsize Tree 2};
      \node[right=-0.1cm of t2box]                   {\normalsize $+\cdots${}};
    
    \end{tikzpicture}\captionof{figure}{Illustration of the tree ensemble.}
      \label{fig:decision-tree-ensemble}
     \end{minipage}%
    \begin{minipage}[c]{0.52\linewidth}
      \centering
    \begin{algorithm}[H]
        
      \begin{algorithmic}[1]
        \Require 
          Calibration set $\cD=(\gX,\gY)$, \qquad \qquad 
          Group indicator function $\vg:\gX\to\{0,1\}^{|G|}$, 
          Uncalibrated base model $f_0:\mathcal{X}\to\mathbb{R}$,\qquad \qquad
          Ensemble solver \Call{SolveEnsemble}{}
    
        \State Set the $(|\cG|+1)$-dimensional input features $\cX'=(f_0(\cX),\vg(\gX))$
        \State Set the output features $\cY'=\gY-f_0(\cX)$
        \State\Return$p_\mathcal{G}(f_0)=$\Call {SolveEnsemble}{$\cX',\cY'$}
    
        \end{algorithmic}
        \caption{Discretization-free Multicalibration}
        \label{alg:mc-abstract}

    \end{algorithm}

    \end{minipage}

  \end{figure}

An illustration of \Cref{eq:function-class} can be found in \Cref{fig:decision-tree-ensemble}. 
Being discretization-free enables our algorithm to avoid the complexity with discretization, makes it simpler and provides more flexibility than existing ones.
\Cref{alg:mc-abstract} outlines how we can optimize for \Cref{eq:function-class} using on-the-shelf solvers like LightGBM \citep{LightGBM} or XGBoost \citep{chen2016xgboost} \footnote{In practice, instead of separating $ \gS_1(f_0)$ and $ \gS_2(\gG)$, we use $s_1,s_2\in \gS_1(f_0)\cup \gS_2(\gG)$ as the splitting criteria in \Cref{eq:ensemble} for a simpler implementation. It slightly expands the function class, giving an even smaller $\epsilon_{loss}$ in \Cref{assp:loss-once}.} with simple features. 
Also, although the evaluation of \Cref{def:multi} still requires discretized predictions, our discretization-free predictor can be evaluated under flexible discretization scheme with consistently strong performance. 
We formalize this flexibility in \Cref{def:discretization}.
\begin{definition}[Discretization Operation]
    \label{def:discretization}
     A function $\tilde{f}_m$ is an $m$-discretized version of a predictor $f$ if it maps the continuous output of $f$ to a finite set of $m$ values, with the property that $\tilde{f}_m(x_1)\ge \tilde{f}_m(x_2)$ whenever $f(x_1)\ge f(x_2)$ for any $x_1,x_2\in \gX$. We write the discretization error $\epsilon_{round}$ as 
     \begin{equation}
        \epsilon_{round}= \ell(\tilde{f}_m^{cal},\cD) -  \ell(f^{cal},\cD),
        \label{eq:round}
    \end{equation}
\end{definition}
Note that we do not restrict ourselves to a specific discretization function for this evaluation. Rounding to the nearest value on a predefined grid, which is commonly used in existing multicalibration algorithms, is among the possible discretization methods we allow. As we will show shortly, unlike algorithms that incorporate discretization as an intrinsic component, our discretization-free predictor performs well under various discretization schemes used solely for evaluation purposes.

Through the following lemmas, we associate the continuous optimization for the squared loss with the discretized metric of multicalibration.
\begin{lemmaE}[][end,restate, no link to proof]
Consider the post-processing step $p_\gG$ in \Cref{eq:function-class}. 
For all $f\in [0,1]$ and $m\in \sN$, and for any $m$-discretized version $\tilde f_m$ of $f$, 
we have 
\label{lemma:discretization}
\begin{equation*}
\ell(p_\gG(f),\cD)\le \ell(p_\gG(\tilde f_m),\cD)
\end{equation*}
\end{lemmaE}
The \hyperref[proof:prAtEnd\pratendcountercurrent]{proof} follows by constructing a set of trees $T\in \gT(f,\gG)$ such that $f+\sum_{t\in T}t$ approximates $p_\gG(\tilde f_m)$ well. \footnote{The proof requires a right-continuity assumption on the discretization in \Cref{def:discretization-formal}, though it's solely for the proof and has minimal effect on the algorithm.}
\begin{proofE}
    Since 
    \begin{equation*}
        \ell(p_\gG(f),\cD)=\min_{T\subseteq{\cal T}(f,\gG)}\ell\left( f+\sum_{t \in T }t,\cD\right),
    \end{equation*}
    we show that $\forall \epsilon>0$, there exists $T\subseteq{\cal T}(f,\gG)$ such that 
    \begin{equation*}
        \ell\left(f+\sum_{t\in T}t,\cD\right)\le \ell\left(p_\gG\left(\tilde f_m\right),\cD\right)+\epsilon.
    \end{equation*}
    \begin{align*}
        &\ell\left(f+\sum_{t\in T}t,\cD\right)- \ell\left(p_\gG\left(\tilde f_m\right),\cD\right)\\
        =&\E_\cD\left[\left(f+\sum_{t\in T}t-y\right)^2\right]-\E_\cD\left[\left(p_\gG\left(\tilde f_m\right)-y\right)^2\right]\\
        =&\E_\cD\left[\left(f+\sum_{t\in T}t-p_\gG(\tilde f_m)\right)\left(f+\sum_{t\in T}t+p_\gG(\tilde f_m)-2y\right)\right]\\
        \le& \sqrt{\E_\cD\left[\left|f+\sum_{t\in T}t-p_\gG(\tilde f_m)\right|^2\right]} \sqrt{\E_\cD\left[\left(\left|f+\sum_{t\in T}t\right|+\left|p_\gG(\tilde f_m)\right|+2|y|\right)^2\right]}\\
        \le& 4\sqrt{\E_\cD\left[\left|f+\sum_{t\in T}t-p_\gG(\tilde f_m)\right|^2\right]}.
    \end{align*}
    Thus, it suffices to show that 
    \begin{equation*}
        \forall \epsilon>0, \exists T\subseteq{\cal T}(f,\gG), \left|f+\sum_{t\in T}t-p_\gG(\tilde f_m)\right|\le\epsilon/4
    \end{equation*}
    Since $p_\gG(\tilde f_m)=\tilde f_m+\sum_{t\in T'}t$ where $T'\in\gT(\tilde f_m,\gG)$, we write 
    \begin{equation*}
        \left|f+\sum_{t\in T}t-p_\gG(\tilde f_m)\right|\le\left|f+\sum_{t\in T_1}t-\tilde f_m\right| +\left|\sum_{t\in T_2}t+\sum_{t\in T'}t\right| 
    \end{equation*}
    and try to construct $T_1,T_2 \subseteq \gT(f,\gG)$ to bound the two terms.

    We first show the second term can be zero by showing that $\gT(\tilde f_m,\gG)\subseteq \gT(f,\gG)$.
    By \Cref{eq:splitting-criteria,eq:ensemble}, it suffices to show that
    \begin{equation}
        \forall v\in [0,1],\exists u\in[0,1]\text{ s.t. } \{x\in \gX:\tilde f_m(x)\ge v\}=\{x\in \gX:f(x)\ge u\}
        \label{eq:discretize-equal}
    \end{equation}
    assuming $\{x\in \gX:\tilde f_m(x)\ge v\}\neq \emptyset$.

    Let 
    \begin{equation}
        u=\min\{f(x):\tilde f_m(x)\ge v\}\in[0,1].
        \label{eq:discretize-u}
    \footnote{
        The minimum exists because the infimum belongs to the set: $\inf\{f(x):\tilde f_m(x)\ge v\} \in \{f(x):\tilde f_m(x)\ge v\}$. Recall that there exists a monotonic non-decreasing $d$ that's right-continuous on $[0,1]$, s.t. $\tilde f_m(x)=d(f(x))$. It suffices to show that $y_0\triangleq \inf\{y:d(y)\ge v\} \in \{y:d(y)\ge v\}\triangleq S$. By definition of infimum, $\forall n\in \sN^*,\exists y_n\in S$ s.t. $y_0\le y_n< y_0+\frac1n$. Then $\lim_{n\to \infty}y_n=y_0$. Since $d$ is right-continuous, $\lim_{n\to \infty}d(y_n)=d(y_0)$. Since $d(y_n)\ge v$, we have $d(y_0)\ge v$. Thus, $y_0\in S$.
    }
    \end{equation}
    It follows directly that $\forall x$ s.t. $\tilde f_m(x)\ge v$, $f(x)\ge u$. On the other hand, since $\tilde f_m=d(f)$ for some monotonically non-decreasing $d$, we have $\forall x$ s.t. $f(x)\ge u$, $\tilde f_m(x)=d(f(x))\ge d(u)$. Since $u\in \{f(x):\tilde f_m(x)\ge v\}=\{y:d(y)\ge v\}$, we have $d(u)\ge v$. Then $\{x\in \gX:\tilde f_m(x)\ge v\}=\{x\in \gX:f(x)\ge u\}$ follows directly. This indicates we can construct $T_2=T'$.

    It remains to show that 
    $\forall \epsilon>0$, $\exists T_1\subseteq{\cal T}(f,\gG)$ such that $\left|f+\sum_{t\in T_1}t-\tilde f_m\right|\le\epsilon/4$. Let $u_i=\min\{f(x):\tilde f_m(x)\ge v_i\}$ where $v_0<\cdots<v_{m-1}$ is the range of $\tilde f_m$. 
    We partition $[0,1]$ into $\frac2\epsilon$ intervals 
    \begin{equation*}
        [0,\epsilon/2),[\epsilon/2,2\epsilon/2),\cdots,[1-\epsilon/2,1].
    \end{equation*}
    For each $u_i$, we denote $n(u_i)$ as the largest integer such that $n(u_i)\cdot \epsilon/2\le u_i$. We further split the interval containing $u_i$ into two intervals $[n(u_i)\cdot \epsilon/2,u_i)$ and $[u_i,(n(u_i)+1)\cdot \epsilon/2)$. In this way, we partition $[0,1]$ into $\frac2\epsilon+m-1$ intervals such that each interval has size $\frac{\epsilon}{2}$, and that $u_1,\cdots,u_{m-1}$ can only be at the boundaries of some intervals. We write the intervals as $[l_1,r_1),\cdots,[l_{2/\epsilon+m-1},r_{2/\epsilon+m-1}]$ where $l_1=0,r_{2/\epsilon+m-1}=1$.

    We then construct the following trees
    \begin{align*}
        &c_i=d(l_i)-\frac{l_i+r_i}{2},~ t_i(x)=\mathbb{I}\{f(x)\ge l_i\}\cdot\left(c_i-\sum_{j=1}^{i-1}c_j\right),~ T_1=\{t_i\}_{i=1}^{2/\epsilon+m-1}\\
    \end{align*}
    Then $\forall x, $ assume $f(x)\in [l_k,r_k)$ for some $k$. Then
    \begin{align*}
        &\left|f(x)+\sum_{t\in T}t(x)-\tilde f_m(x)\right|\\
        =&\left|f(x)+d(l_k)-\frac{l_k+r_k}{2}-d(l_k)\right|\\
        =&\left|f(x)-\frac{l_k+r_k}{2}\right|\le \frac{r_k-l_k}{2}\le \frac{\epsilon}{4}.
    \end{align*}
    Therefore, we show that $\forall \epsilon>0$, $\exists T_1\subseteq{\cal T}(f,\gG)$ such that $\left|f+\sum_{t\in T_1}t-\tilde f_m\right|\le\epsilon/4$.
\end{proofE}

\begin{lemmaE}[][end,restate,no link to proof]
    Given a predictor $f$ whose range is of size $m$, 
    training an ensemble of decision tress on $f$ as in the post-processing step in \Cref{eq:function-class} is equivalent to adding a linear function of the groups on each output level set of $f$. That is,
    \begin{equation}
            \Set{\
            f+\sum_{t \in T }t | T\subseteq{\cal T}(f,\gG)\
        } = \Set{\
            f+\sum_{j= 1}^m\mathbb{I}\{f(x)=v_j\}\left(c^j+\sum_i c_i^{j}g_i\right) |  c_i^j\in \sR\
        }
    \end{equation}
    where $v_1,\cdots v_m\in \cR(f),v_1<v_2<\cdots <v_m$.
    \label{lemma:ensemble}
\end{lemmaE}
The \hyperref[proof:prAtEnd\pratendcountercurrent]{proof} follows by showing that both sets are a subset of each other.
\begin{proofE}
    We prove by showing that both sets are a subset of each other.
    
    $\forall h\in \Set{\
    f+\sum_{j\in 1}^m\mathbb{I}\{f(x)=v_j\}\left(c^j+\sum_i c_i^{j}g_i\right) |  c_i^j\in \sR\
}$, let $h_j=c^j+\sum_i c_i^{j}g_i$. Then we can write $h$ as
\begin{align*}
    h&=f+\sum_{j\in 1}^m\mathbb{I}\{f(x)=v_j\}h_j\\
    &=f+\sum_{j\in 1}^m\mathbb{I}\{f(x)\ge v_j\}\left(h_j-\sum_{i=1}^{j-1}h_i\right)\\
    &\triangleq f+\sum_{j\in 1}^m\mathbb{I}\{f(x)\ge v_j\}h'_j,
\end{align*}
where $h'_j$ is still a linear function of $g_i$. Let $h'_j={c'}^j+\sum_i {c'}_i^jg_i$. 
By \Cref{eq:ensemble}, 
\[\forall i,j,~\mathbb{I}\{f\ge v_{j}\}{c'}_i^jg_i\in \gT(f,\gG)\]
because we can set 
\[s_1=\{x\in \gX:f(x)\ge v_{j-1}\},s_2=\{x\in \gX:g_i(x)=1\},c_1={c'}_i^j,c_2=c_3=c_4=0.\]
And also, \[\forall j, ~\mathbb{I}\{f\ge v_{j-1}\} {c'}^j\in \gT(f,\gG)\]
because we can set 
\[s_1=\{x\in \gX:f(x)\ge v_{j-1}\},s_2=\{x\in \gX:g_1(x)=1\},c_1=c_2={c'}^j,c_3=c_4=0.\]
Thus, $\forall h\in \Set{\
f+\sum_{j\in 1}^m\mathbb{I}\{f(x)=v_j\}\left(c^j+\sum_i c_i^{j}g_i\right) |  c_i^j\in \sR\
}$, we have
\[h\in \Set{\
    f+\sum_{t \in T }t | T\subseteq{\cal T}(f,\gG)\
}.\]

On the other hand, for each of the tree $t \in \gT(f,\gG)$,
\begin{equation*}
    \begin{aligned}
        t(x)=&c_1\cdot \sI\{f(x)\ge  v\}g_i(x)+c_2\cdot \sI\{f(x)\ge  v \}(1-g_i(x))  \\
        +&c_3\cdot \sI\{f(x)<v\}g_i(x)+c_4\cdot \sI\{f(x)<v\}(1-g_i(x))
    \end{aligned}
\end{equation*}
for some $v\in [0,1], i\in |\gG|$. Let the range of $f$ be $R(f)=\{v_1,\cdots,v_m\}$ where $v_1<v_2<\cdots<v_m$.
Assume \begin{equation*}
    0\le v_1\cdots<v_{i-1}<v\le v_i<v_{i+1}<\cdots <v_m\le1
\end{equation*}
Then
\begin{equation*}
    \begin{aligned}
        t(x)=&\sum_{j=i}^{m}\sI\{f(x)=v_{j}\}(c_2+(c_1-c_2)\cdot g_i(x))  +\sum_{j=1}^{i-1}\sI\{f(x)=v_{j}\}(c_4+(c_3-c_4)\cdot g_i(x))\\
        \in& \Set{\
        f+\sum_{j\in 1}^m\mathbb{I}\{f(x)=v_j\}\left(c^j+\sum_i c_i^{j}g_i\right) |  c_i^j\in \sR\
        }
    \end{aligned}
\end{equation*}
Therefore, $\forall h\in \Set{\
        f+\sum_{t \in T }t | T\subseteq{\cal T}(f,\gG)\
}$, we have
\begin{equation*}
      h\in
      \Set{\
      f+\sum_{j\in 1}^m\mathbb{I}\{f(x)=v_j\}\left(c^j+\sum_i c_i^{j}g_i\right) |  c_i^j\in \sR\
      }
\end{equation*}
\end{proofE}
\begin{lemmaE}[][end,restate, no link to proof]
    If a predictor $f$ with $\cR(f)=\{v_1,\cdots,v_m\}$ has a multicalibration error w.r.t. $\cG$ larger than $\alpha$, then there exists linear functions $h_j=\sum_i c_i^{j}g_i$, such that 
    \begin{equation}
        \ell(f,\cD)>\ell\left(f+\sum_{j=1}^m\mathbb{I}\{f=v_j\}h_j , \cD\right)+\alpha^2.
    \end{equation}
    \label{lemma:multicalibration}
\end{lemmaE}
The \hyperref[proof:prAtEnd\pratendcountercurrent]{proof} follows by picking the worst group $g_k$ from the definition of multicalibration for each $v_j$ and constructing the linear functions $h_j=c_k^jg_k$.
\begin{proofE}
    Assume $f$ has the largest calibration error on the $k$-th group, namely
    \begin{equation*}
        k=\argmax_{i\in[ |G|]}\sum_{v\in R(f)}\Pr_{(x,y) \sim \cD}[f(x)=v,g_i(x)=1] \cdot \left|\E_{(x, y) \sim \cD}\left[f(x)-y\mid f(x)=v,g_i(x)=1\right]\right|
    \end{equation*}
    Then we let 
    \begin{equation}
        c_i^j=\begin{cases*}
            -\E_{(x, y) \sim \cD}\left[f(x)-y\mid f(x)=v_j,g_i(x)=1\right],& if $i=k$;\\
            0,& if $i\neq k$.
        \end{cases*}
        \label{eq:patch}
    \end{equation}
    Let $c_k^j=\alpha_j$. Then $h_j=\alpha_j g_k$.
    Then, 
    \begin{align*}
        &\ell(f,\cD)-\ell\left(f+\sum_{j\in [|R(f)|]}\mathbb{I}\{f=v_j\}h_j , \cD\right)\\
        =&\sum_{j=1}^m \Pr_{(x,y) \sim \cD}[f(x)=v_j] \E_{\cD}\left[\left(f(x)-y\right)^2\mid f(x)=v_j\right]\\
        &-\sum_{j=1}^m \Pr_{(x,y) \sim \cD}[f(x)=v] \E_{\cD}\left[\left(f(x)+h_j(x) -y\right)^2\mid f(x)=v_j\right]\\
        =&-\sum_{j=1}^m\Pr_{(x,y) \sim \cD}[f(x)=v_j]  \E_{\cD}\left[2h_j(x)(f(x)-y)+h_j^2(x)\mid f(x)=v_j\right]\\
        =&-\sum_{j=1}^m\Pr_{(x,y) \sim \cD}[f(x)=v_j]  \E_{\cD}\left[2\alpha_j g_k(f(x)-y)+\alpha_j^2 g_k\mid f(x)=v_j\right]\\
        =&-\Pr_{(x,y) \sim \cD}[g_k(x)=1]\sum_{j=1}^m \Pr_{(x,y) \sim \cD}[f(x)=v_j] \E_{\cD}\left[2\alpha_j (f(x)-y)+\alpha_j^2 \mid f(x)=v_j,g_k(x)=1\right]\\
        =&\Pr_{(x,y) \sim \cD}[g_k(x)=1]\sum_{j=1}^m \Pr_{(x,y) \sim \cD}[f(x)=v_j] \alpha_j^2\\
    \end{align*}
    The final equation is due to $\alpha_j=-\E_{(x, y) \sim \cD}\left[f(x)-y\mid f(x)=v_j,g_i(x)=1\right]$.  By the definition of multicalibration error, \Cref{eq:patch} indicates that 
    \begin{equation*}
        \sum_{j=1}^m\Pr_{(x,y) \sim \cD}[f(x)=v_j] |\alpha_j|>\frac{\alpha}{\Pr_{(x,y) \sim \cD}[g_k(x)=1]}.
    \end{equation*}
    By Cauchy-Schwarz inequality, we have
    \begin{align*}
        &\ell(f,\cD)-\ell\left(f+\sum_{j\in [|R(f)|]}\mathbb{I}\{f=v_j\}h_j , \cD\right)\\
        =&\Pr_{(x,y) \sim \cD}[g_k(x)=1]\sum_{j=1}^m \Pr_{(x,y) \sim \cD}[f(x)=v_j] \alpha_j^2\\
        \ge &\Pr_{(x,y) \sim \cD}[g_k(x)=1]\frac{\left(\sum_{j=1}^m \Pr_{(x,y) \sim \cD}[f(x)=v_j] \alpha_j\right)^2}{\sum_{j=1}^m \Pr_{(x,y) \sim \cD}[f(x)=v_j] }\\
        >&\frac{\alpha^2}{\Pr_{(x,y) \sim \cD}[g_k(x)=1]}\ge \alpha^2.
    \end{align*}
\end{proofE}

Before the main theorem, we introduce the following necessary assumption.
\begin{assumption}[Loss Saturation of \Cref{alg:mc-abstract}]
    Given an uncalibrated predictor $f_0$, let $f^{cal} = p_\gG(f_0)$ be our proposed predictor calibrated with \Cref{alg:mc-abstract}. We assume that the squared loss of $f$ is $\epsilon$-saturated w.r.t $p_\cG$ with a small marginal improvement $\epsilon_{loss}\ll 1$:
    \begin{equation}
        \ell(f^{cal},\cD) \leq \ell(p_\gG(f^{cal}),\cD) + \epsilon_{loss}
    \end{equation}
    that is, running \Cref{alg:mc-abstract} on $f^{cal}$ again gives a small marginal improvement of at most $\epsilon_{loss}$.
    \label{assp:loss-once}
\end{assumption}

This is a reasonable assumption because our formulation is essentially a supervised learning problem with simple features, where the objective is to minimize the squared loss. Decision tree ensembles are a well-established and widely used method in general for this setting, with mature solvers 
and have stood the test of time. Consequently, the loss achieved by our method should already be low, leaving little room for further improvement.
We also provide a numerical simulation in \Cref{tab:second-time} on the datasets to empirically support this assumption.

\begin{theorem}[\Cref{alg:mc-abstract} Yields Small Multicalibration Error Given \Cref{assp:loss-once}]
    Let $f^{cal}$ be the predictor obtained by \Cref{alg:mc-abstract}, and $\tilde{f}_m^{cal}$ be its $m$-discretized version. If the discretization error of $\tilde{f}_m^{cal}$ is $\epsilon_{round}$,
    and the loss of $f^{cal}$ is $\epsilon_{loss}$-saturated with respect to $p_\gG$ as in \Cref{assp:loss-once},
    then the multicalibration error of $\tilde{f}_m^{cal}$ with respect to $\gG$ is at most $\sqrt{\epsilon_{loss} + \epsilon_{round}}$.
    \label{thm:main}
\end{theorem}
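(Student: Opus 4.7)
The strategy is a contrapositive/contradiction chaining of the three preceding lemmas, showing that a large multicalibration error on $\tilde f_m^{cal}$ would contradict the loss-saturation assumption. Concretely, I suppose for contradiction that the multicalibration error of $\tilde f_m^{cal}$ with respect to $\gG$ exceeds $\alpha := \sqrt{\epsilon_{loss}+\epsilon_{round}}$, and then exhibit a post-processed version of $\tilde f_m^{cal}$ whose squared loss is smaller than $\ell(f^{cal},\cD)$ by more than $\epsilon_{loss}-\epsilon_{round}$, which contradicts Assumption~\ref{assp:loss-once} combined with the discretization gap in Definition~\ref{def:discretization}.

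The first step is to apply Lemma~\ref{lemma:multicalibration} to $\tilde f_m^{cal}$: if its multicalibration error exceeds $\alpha$, then there exist level-set-wise linear combinations of the group indicators $h_j = \sum_i c_i^j g_i$ such that
\begin{equation*}
\ell\!\left(\tilde f_m^{cal} + \sum_{j=1}^m \mathbb{I}\{\tilde f_m^{cal}=v_j\}\, h_j,\;\cD\right) < \ell(\tilde f_m^{cal},\cD) - \alpha^2.
\end{equation*}
The second step is to invoke Lemma~\ref{lemma:ensemble}, which says that this specific class of per-level-set linear corrections coincides with the class of tree-ensemble corrections on top of $\tilde f_m^{cal}$. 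Therefore the left-hand side above is at least $\ell(p_\gG(\tilde f_m^{cal}),\cD)$, giving
\begin{equation*}
\ell(p_\gG(\tilde f_m^{cal}),\cD) < \ell(\tilde f_m^{cal},\cD) - \alpha^2.
\end{equation*}

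The third step uses Lemma~\ref{lemma:discretization} to transfer the bound from $\tilde f_m^{cal}$ to $f^{cal}$: since $\ell(p_\gG(f^{cal}),\cD) \le \ell(p_\gG(\tilde f_m^{cal}),\cD)$, and since $\ell(\tilde f_m^{cal},\cD) = \ell(f^{cal},\cD)+\epsilon_{round}$ by Equation~\ref{eq:round}, we obtain
\begin{equation*}
\ell(p_\gG(f^{cal}),\cD) < \ell(f^{cal},\cD) + \epsilon_{round} - \alpha^2.
\end{equation*}
Finally, Assumption~\ref{assp:loss-once} gives $\ell(f^{cal},\cD) \le \ell(p_\gG(f^{cal}),\cD) + \epsilon_{loss}$, so combining the two inequalities yields $0 < \epsilon_{loss} + \epsilon_{round} - \alpha^2$, i.e.\ $\alpha < \sqrt{\epsilon_{loss}+\epsilon_{round}}$, contradicting the choice of $\alpha$.

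\textbf{Main obstacle.} All three lemmas are already in hand, so the proof is essentially a bookkeeping exercise that composes them in the right order. The only subtle point is ensuring the direction of each inequality is correctly preserved when moving between $f^{cal}$, $\tilde f_m^{cal}$, and their post-processed versions---in particular that Lemma~\ref{lemma:discretization} is applied in the form $\ell(p_\gG(f^{cal}),\cD)\le \ell(p_\gG(\tilde f_m^{cal}),\cD)$ rather than the reverse, and that $\epsilon_{round}\ge 0$ is used implicitly so the sign in Equation~\ref{eq:round} lines up with the inequality chain. No further technical machinery is needed.
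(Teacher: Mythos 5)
Your proposal is correct and follows essentially the same route as the paper's proof: the paper likewise chains Lemma~\ref{lemma:multicalibration} (via Lemma~\ref{lemma:ensemble}) to lower-bound the loss reduction $\ell(\tilde f_m^{cal},\cD)-\ell(p_\gG(\tilde f_m^{cal}),\cD)$ by $\alpha^2$, and then upper-bounds the same quantity by $\epsilon_{loss}+\epsilon_{round}$ using Equation~\ref{eq:round}, Lemma~\ref{lemma:discretization}, and Assumption~\ref{assp:loss-once}. Your contradiction framing is just a restatement of the paper's direct comparison of these two bounds, and all inequality directions are handled correctly.
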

\begin{proof}

If $\tilde{f}_m^{cal}$ has a multicalibration error larger than $\alpha$, then post-processing it with $p_\gG$ can reduce the loss:
\begin{align*}
    \ell(\tilde{f}_m^{cal},\cD) - \ell\left(p_\gG(\tilde{f}_m^{cal}),\cD\right)
    =&\ell(\tilde{f}_m^{cal},\cD) - \min_{T\subseteq{\cal T}(\tilde{f}_m^{cal},\gG)}\ell\left(\tilde{f}_m^{cal}+\sum_{t \in T}t,\cD\right) \tag{\Cref{eq:function-class}}\\
    =&\ell(\tilde{f}_m^{cal},\cD) - \min_{h_j}\ell\left(\tilde{f}_m^{cal}+\sum_{j=1}^m\mathbb{I}\{\tilde{f}_m^{cal}=v_j\}h_j,\cD\right) \tag{\Cref{lemma:ensemble}}\\
    >&\alpha^2 \tag{\Cref{lemma:multicalibration}}
\end{align*}

On the other hand, the reduction of loss should be small:
\begin{align*}
    \ell(\tilde{f}_m^{cal},\cD) - \ell\left(p_\gG(\tilde{f}_m^{cal}),\cD\right)
    \leq &\ell(f^{cal},\cD) + \epsilon_{round} - \ell\left(p_\gG(\tilde{f}_m^{cal}),\cD\right) \tag{\Cref{eq:round}}\\
    \leq &\ell(f^{cal},\cD) + \epsilon_{round} - \ell\left(p_\gG(f^{cal}),\cD\right) \tag{\Cref{lemma:discretization}}\\
    \leq &\epsilon_{loss} + \epsilon_{round} \tag{\Cref{assp:loss-once}}
\end{align*}

This implies that the multicalibration error of $\tilde{f}_m^{cal}$ is at most $\sqrt{\epsilon_{loss} + \epsilon_{round}}$.
\end{proof}

\begin{remark}
To wrap up the analysis, we provide a high level comparison of our algorithm with relevant prior works. 
The theoretical algorithm proposed in \citet{błasiok2023lossminimizationyieldsmulticalibration} is the most similar one to ours in that both theoretically achieve discretization-free multicalibration through loss minimization. While they theoretically proved loss saturation for an intractable class of neural networks by  adding more nodes without practical implementation, we empirically verify this assumption for tree ensembles and provide an efficient algorithm.

On the other hand, the construction of a tree ensemble is highly relevant to the algorithm presented in \citet{pmlr-v202-globus-harris23a}. Unlike their approach, ours uses the continuous output as a feature in the ensemble, thereby avoiding discretization. Apart from that, our algorithm is quite similar to the first iteration of theirs via the relationship shown in \Cref{lemma:ensemble}, with \Cref{assp:loss-once} indicating  convergence after this single iteration. 
\end{remark}

\section{Experiments}
\label{sec:experiment}

\subsection{Dataset Setup}
In this section, we describe the dataset we use and the experiment pipeline. 

A common dataset to consider in fairness and multicalibration literature is the ACS dataset, obtained through the Folktables \citep{ding2021retiring} package. The Folktables package defines a rich set of tasks, and we consider three tasks derived from the dataset: \textbf{income regression} and \textbf{travel time regression} predicts the person's income and commute time, and \textbf{income classification} predicts whether a person's income is higher than \$50,000. On each of these tasks, we define around 50 groups based on the data attributes.

The prior work of \citet{hansen2024multicalibrationpostprocessingnecessary} has pointed out that multicalibration post-processing has limited effect on tabular data, so we also evaluate our algorithm on various text and image datasets. 
\citet{utkface} introduced the \textbf{UTKFace} dataset, with a person's face image associated with their age, gender and race. We consider the task of predicting the age of the person given the face, and use individual's attribute as the group. ISIC challenge dataset is an image dataset related to skin lesion, and we considered the task introduced in the \textbf{ISIC Challenge 2019} \citep{ISIC1,ISIC2,ISIC3} to predict whether the skin lesion is a melanocytic nevus or not. The dataset provide metadata including the age, gender or anatomic site of the patient, and we use these as the group. 
Finally, we consider the \textbf{Comment Toxicity Classification} dataset \citep{CivilComments} from the WILDS dataset \citep{wilds2021}, which is a text classification task to predict whether a comment is toxic or not. The WILDS dataset already defines eight groups based on the identities mentioned in the comment, and we use them as the group.

Detailed description of the dataset, including dataset partitioning and the specific groups, is left to \Cref{sec:appendix-dataset}.

\subsection{Our algorithms and the baselines}
In this section we describe the algorithms evaluated in these experiments, which includes two discretization-free baselines with no multicalibration guarantee, two existing multicalibration algorithms, and our algorithm. The \textbf{uncalibrated baseline} depends largely on the task: We used linear models for tabular data, ResNet for image data, and DistilBERT for text data. Due to its dependency on the specific dataset, further details are left to \Cref{sec:appendix-dataset}. Another baseline algorithm that we might be interested in is the \textbf{multiaccurate predictor} \citep{pmlr-v80-hebert-johnson18a,Roth_uncertainty}, which guarantees that the predicted mean on each subgroup is close to the true mean on that group. Although it's a far weaker notion than multicalibration, it's an extremely simple method promoting fairness across subgroups that can be implemented with a single linear model.

The multicalibration algorithms that we compare with are MCBoost and LSBoost. 
Both require discretizing the output space in advance and work in an iterative manner by correcting the prediction on each level set until convergence.
\textbf{MCBoost} \citep{Roth_uncertainty} enumerates over $\{x:g_i(x)=1,f(x)=v\}$ for all $i\in [|\gG|]$ and $v\in \gR(f)$ in every single iteration and fixes the subset  with the largest deviation from the true mean by setting the predicted value on that subset to the true mean. \textbf{LSBoost} \citep{pmlr-v202-globus-harris23a}, on the other hand, considers the groups as a whole. In each iteration, the whole calibration set is partitioned into multiple level sets based on the current predicted value, and a weak learner is fit with the group information for each level set using data points in that set. The predicted value of the weak learner will assign the data points to a new level set for the next iteration. 
Finally, we implement \textbf{our algorithm} described in \Cref{alg:mc-abstract} with the LightGBM \citep{LightGBM} solver. 

MCBoost and LSBoost require discretization before calibration, so for each selection of discretization, we find the hyperparameters that minimize the multicalibration error on the validation set. As for the other three algorithms, we just directly minimize the squared error. Further details, including training specifications and hyperparameters, are left to \Cref{sec:appendix-hyperparameter}. 
For all experiments, we fix the training set for the uncalibrated baseline and the test set for evaluation. The calibration and validation set are partitioned 10 times, with the standard deviation indicated in the plots and the tables.

\subsection{Experiment Results}
\subsubsection{Empirical validation of \Cref{assp:loss-once}}
\label{sec:emp-loss-once}
We first validate \Cref{assp:loss-once} empirically: Running \Cref{alg:mc-abstract} a second time gives small marginal improvement. Note that due to the imperfection of optimization, we would not be able to obtain the true minimizer of the test set. Instead, we run the optimization on the calibration set with early stopping and present the empirical loss improvement $\hat \epsilon_{loss}= \ell(\hat p_\gG(f))-\ell(f)$ on the test set. The true $\epsilon_{loss}$ would contain an additional optimization error term $\epsilon_{opt}$, satisfying $\epsilon_{loss}=\hat \epsilon_{loss}+\epsilon_{opt}$. We present $\hat \epsilon_{loss}$ in \Cref{tab:second-time} and summarize our first observation:
\begin{observation}
Across all different datasets and tasks evaluated, the marginal improvement of running \Cref{eq:function-class} a second time is small, validating \Cref{assp:loss-once}.
\end{observation}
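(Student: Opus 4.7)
The plan is an empirical validation rather than a mathematical derivation, since the observation is a claim about measured quantities on six real datasets (ACS income regression, travel time regression, and income classification; UTKFace age; ISIC 2019; and CivilComments). For each dataset and task, I would (i) train the uncalibrated base model $f_0$ on its fixed training split, (ii) run \Cref{alg:mc-abstract} on the calibration split to obtain the first post-processed predictor $f^{cal}=p_\gG(f_0)$, (iii) feed $f^{cal}$ back into \Cref{alg:mc-abstract} as the new ``base'' predictor (using the outputs of $f^{cal}$ together with the group indicators $\vg$ as features) to obtain $\hat p_\gG(f^{cal})$, and (iv) report, on the held-out test split, the empirical marginal improvement
\begin{equation*}
\hat\epsilon_{loss}=\ell(f^{cal},\test)-\ell(\hat p_\gG(f^{cal}),\test).
\end{equation*}
Populating \Cref{tab:second-time} with these numbers across all six tasks, and confirming that $\hat\epsilon_{loss}$ is uniformly small relative to $\ell(f^{cal},\test)$, is what constitutes ``validating \Cref{assp:loss-once}''.

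To connect the measured $\hat\epsilon_{loss}$ to the theoretical $\epsilon_{loss}$ appearing in the assumption, I would use the decomposition $\epsilon_{loss}=\hat\epsilon_{loss}+\epsilon_{opt}$ stated in the paragraph preceding the observation. The first term is directly estimated; the second reflects how close the LightGBM solver gets to the true squared-loss minimizer over the depth-two tree ensemble class $\gT(f^{cal},\gG)$. I would argue informally that $\epsilon_{opt}$ is small because this is a low-dimensional ERM problem (the only features are the scalar $f^{cal}(x)$ and the group indicators) on which boosted trees are a mature, well-tuned tool, and I would tune the second pass with the same hyperparameter search protocol used for the first pass to keep the two passes comparable.

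The main obstacle is distinguishing genuine loss saturation from a spurious small $\hat\epsilon_{loss}$ produced by underfitting the second pass. Several sanity checks address this. First, the second invocation of \Cref{alg:mc-abstract} should be run with early stopping driven by the validation split (as already specified in the experimental setup), so that it only terminates when further trees stop helping, not when a budget runs out. Second, to rule out that the first pass itself was weak and the second pass merely repeats its failure, I would cross-check that $\ell(f^{cal},\test)$ is already competitive with (or better than) the loss of the uncalibrated baseline and the discretized baselines MCBoost and LSBoost; the main experiment tables serve this purpose. Third, I would repeat the calibration/validation partition ten times and report the standard deviation of $\hat\epsilon_{loss}$, and verify the effect is consistent rather than a quirk of a particular split. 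If across all six datasets $\hat\epsilon_{loss}\ll 1$ with small variability, the observation is supported and \Cref{assp:loss-once} is empirically justified at the scale at which \Cref{thm:main} is later applied.
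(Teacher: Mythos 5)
Your proposal matches the paper's own validation: run \Cref{alg:mc-abstract} a second time on $f^{cal}$, measure the empirical improvement $\hat\epsilon_{loss}=\ell(f^{cal})-\ell(p_\gG(f^{cal}))$ on the held-out test set across all six tasks (the paper's \Cref{tab:second-time}), and invoke the decomposition $\epsilon_{loss}=\hat\epsilon_{loss}+\epsilon_{opt}$ to tie the measurement to \Cref{assp:loss-once}, exactly as in \Cref{sec:emp-loss-once}. Your extra sanity checks (early stopping on the second pass, comparison against baselines, repeated calibration/validation splits) are refinements of, not departures from, the paper's procedure.
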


\begin{table*}[]
  \caption{The empirical marginal improvement of running \Cref{alg:mc-abstract} the second time. The improvement is minimal, validating the loss saturation assumption in \Cref{assp:loss-once}.}
  \vskip 0.1in
\label{tab:second-time}
\centering 
\begin{tabular}{r|cccc}
\hline
Task                          & $\ell(f_0)$ & $\ell(f^{cal})$ & $\ell(p_\cG(f^{cal}))$ & $\hat \epsilon _{loss}=\ell(f^{cal})-\ell(p_\cG(f^{cal}))$ \\ \hline
Skin Lesion Classification    & 0.2262      & 0.1475    & 0.1492           & $-2\times 10^{-3}$                             \\
Income Classification         & 0.1831      & 0.1432    & 0.1431           & $1\times 10^{-4}$                              \\
Comment Toxicity Classification & 0.0589      & 0.0516    & 0.0517           & $-2\times 10^{-5}$                              \\
Age Regression                & 0.0051      & 0.0011    & 0.0011           & $7\times 10^{-6}$                              \\
Income Regression             & 0.0414      & 0.0361    & 0.0361           & $6\times 10^{-7}$                             \\
Travel Time Regression        & 0.0302      & 0.0298    & 0.0298           & $2\times 10^{-6}$                             \\ \hline
\end{tabular}
\vskip -0.1in
\end{table*}

\subsubsection{Threshold-robust evaluation of multicalibration error}

In this section, we empirically validate \Cref{thm:main} and compared our discretization-free algorithm with other multicalibration algorithms. Recall that for the evaluation of multicalibration error, the predictor needs to have a finite range, so we would need to derive a discretized version of a continuous predictor for evaluation.
We ensure a fair comparison across different algorithms by using the same discretization regime. In particular, we use the grid discretization regime and restrict the codomain to $LS=\{\frac{1}{2m},\frac{3}{2m},\cdots, \frac{2m-1}{2m}\}$, where $m$ is the number of element in the level set. We vary $m\in\{10,20,30,50,75,100\}$. While a separate predictor is trained for each $m$ for MCBoost and LSBoost, we only train a single predictor for our algorithm and then evaluate the multicalibration error of its $m$-discretized version with varying $m$.

We present the multicalibration error versus the number of non-empty level sets in \Cref{fig:mcerr}.

\begin{figure}[htbp]
  \begin{minipage}{\textwidth}
  \centering
  \includegraphics[width=\linewidth]{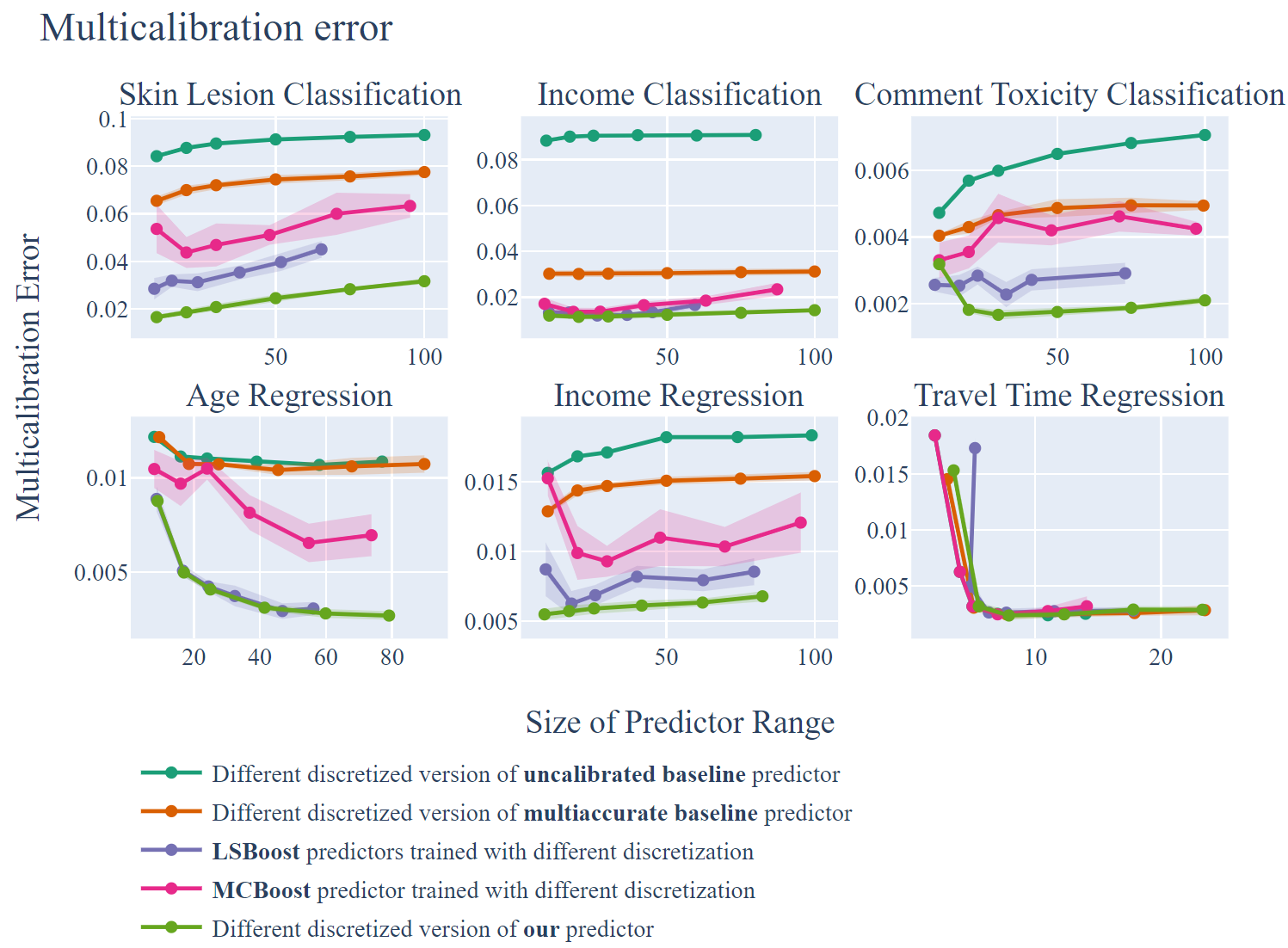}
  \caption[fig:mcerr]{Multicalibration error on different datasets. The y-axis is the multicalibration error, and the x-axis is the size of range, which may not equal $m$, the size of the codomain, and the error band displays the standard deviation. The graph shows that, a predictor trained using our algorithm can be discretized arbitrarily, and its $m$-discretized version has matching or lower multicalibration error than the predictor calibrated with other multicalibration algorithm using that specific discretization.}
  \label{fig:mcerr}
  \end{minipage}
  
\end{figure}

\begin{observation}
  Although empirical baseline algorithms can obtain a low multicalibration error in certain cases, post-processing can further decrease the error in general. 
\end{observation}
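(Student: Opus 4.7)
The plan is to support this observation empirically rather than through formal argument, since it concerns the behavior of baseline algorithms (uncalibrated model, multiaccurate predictor, MCBoost, LSBoost) on real data after an additional application of our post-processing step $p_\gG$. My approach has two parts: (i) establish a baseline multicalibration error curve for each algorithm across the chosen discretization grid, and (ii) show that feeding each baseline's output into Algorithm~\ref{alg:mc-abstract} strictly (or at least weakly) improves the discretized multicalibration error on the test set.

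First, I would reuse exactly the experimental pipeline already described: the same train/calibration/validation/test partition, the same families of groups $\gG$, and the same grid discretization with $m \in \{10,20,30,50,75,100\}$ so that numbers are directly comparable with \Cref{fig:mcerr}. For each baseline $f_{\text{base}}$, I would produce the post-processed predictor $p_\gG(f_{\text{base}})$ using LightGBM on the calibration fold, then measure \Cref{def:multi} on the test fold after $m$-discretization. Averaging over the ten calibration/validation splits yields a mean and standard deviation for each $(f_{\text{base}}, m)$ pair; the observation is supported whenever the post-processed curve lies at or below the baseline curve across datasets and $m$.

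Second, I would tie the empirical finding back to the theory to explain \emph{why} the improvement is generic. Since $p_\gG$ is defined by \Cref{eq:function-class} as the minimizer of squared loss over tree ensembles built from the inputs $(f_{\text{base}}(x), \vg(x))$, we have $\ell(p_\gG(f_{\text{base}}),\cD) \le \ell(f_{\text{base}},\cD)$ by construction. Combined with \Cref{lemma:ensemble} and \Cref{lemma:multicalibration}, any baseline whose multicalibration error exceeds $\sqrt{\epsilon_{loss}+\epsilon_{round}}$ must admit a strict loss improvement under $p_\gG$; thus the observation is essentially the contrapositive of \Cref{thm:main} applied to each baseline. I would cite this reasoning as a sanity check on the empirical trend rather than as a proof, since the observation only claims typical (not universal) improvement.

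The main obstacle is that the claim is qualitative (``can further decrease the error in general''), so I have to design the reporting to make this visible without overstating it. Some baselines, especially MCBoost tuned on the same $m$ used for evaluation, may already be near-optimal for small $m$, giving margins within the standard-deviation band from the ten splits; in such cases I would either report per-split paired differences (so that within-split variance cancels) or use a Wilcoxon signed-rank test across datasets and splits to confirm that the post-processed variant is not worse and is strictly better on the majority of configurations. A secondary subtlety is avoiding double-dipping on the calibration fold: the post-processing must be fit on data disjoint from whatever the baseline already consumed, which I would handle by splitting the calibration fold in half when a baseline already used it.
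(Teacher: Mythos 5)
Your proposal is in the right spirit---this observation is an empirical claim and the paper supports it empirically rather than formally---but you support it by a different route than the paper does. The paper does not run any new experiments or re-post-process other methods: its evidence is simply the existing comparison in \Cref{fig:mcerr}, where ``baseline algorithms'' means the predictors with no multicalibration guarantee (the uncalibrated ERM model and the multiaccurate predictor), not MCBoost or LSBoost. The argument is that on five of the six datasets the uncalibrated baseline sits visibly above the multicalibration methods (including \Cref{alg:mc-abstract}), while on \emph{Travel Time Regression} the baseline is already low---the ``certain cases,'' echoing \citet{hansen2024multicalibrationpostprocessingnecessary}'s finding that ERM can suffice on tabular data---so post-processing helps in general but is not always needed. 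Your design instead applies $p_\gG$ on top of every method, including MCBoost and LSBoost, with paired per-split differences and a signed-rank test; that is a more elaborate (and arguably more direct) test of ``post-processing further decreases the error,'' and your data-splitting caution about not reusing the calibration fold is sensible, but it answers a somewhat different question than the paper's and rests on a broader reading of ``baseline algorithms'' than intended. Your theoretical tie-in is also only partially aligned: \Cref{lemma:multicalibration} and \Cref{lemma:ensemble} guarantee that a miscalibrated baseline admits a strict \emph{loss} improvement under $p_\gG$, but a loss decrease does not by itself imply the post-processed predictor has lower multicalibration error---that conclusion needs the saturation step of \Cref{thm:main} via \Cref{assp:loss-once} applied to the post-processed predictor---so it is right that you present it as a sanity check rather than a proof. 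In short: correct as an alternative empirical validation, but the paper's own support is just the qualitative reading of \Cref{fig:mcerr} with Travel Time Regression as the exceptional case.
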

A direct observation is that \textit{Travel Time Regression} is an outlier among the six subplots in \Cref{fig:mcerr}. Although in all other datasets evaluated, the uncalibrated baseline has a higher multicalibration error, compared to other algorithms, such a difference in the error is not observed in this dataset. The uncalibrated baseline algorithm already obtained low multicalibration error, leaving little room for improvement for other post-processing methods. This partially echoes the observations in \citet{hansen2024multicalibrationpostprocessingnecessary} that ERM on certain function classes can already achieve a low multicalibration error on tabular data; yet it also highlights that post-processing is indeed necessary in general to obtain a multicalibrated predictor.
\begin{observation}
  Our algorithm achieves multicalibration error that matches and often improves upon existing methods. This performance advantage persists even when competing algorithms are specifically tuned using the same discretization scheme used in evaluation.
\end{observation}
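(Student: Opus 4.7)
The plan is to substantiate this observation through an empirical comparison across the six benchmark tasks introduced earlier, designed so that the evaluation protocol is maximally favorable to the baselines. The key methodological commitment is to remove every plausible source of advantage for our method that is not intrinsic to the algorithm itself: I will fix a single discretization regime for evaluation, use that \emph{same} regime to tune the baselines, and only then compare.

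Concretely, I would first fix the grid codomain $LS=\{\frac{1}{2m},\frac{3}{2m},\dots,\frac{2m-1}{2m}\}$ for a sweep $m\in\{10,20,30,50,75,100\}$, and use this as both the discretization for the baselines' internal level sets and the discretization used in the multicalibration error metric of \Cref{def:multi}. Then, for each $m$, I would retrain MCBoost and LSBoost from scratch with that specific $m$ and tune their hyperparameters on the validation split so as to minimize the multicalibration error at that very $m$; this ensures the baselines are evaluated in exactly the regime they are optimized for. In contrast, our algorithm (\Cref{alg:mc-abstract}) would be trained once as a discretization-free predictor, and its $m$-discretized version would be evaluated against the baselines at each $m$, with calibration/validation splits repeated ten times to produce standard deviations.

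The comparison would then be read off from the per-dataset plots of multicalibration error against the number of non-empty level sets (\Cref{fig:mcerr}): the observation is established if our curve lies at or below both baseline curves across the swept $m$ values on most datasets, and is indistinguishable from them on the remainder. I would supplement this with a second qualitative check that our single continuous predictor is not silently trading accuracy for calibration, by reporting squared loss alongside multicalibration error.

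The main obstacle is ensuring the comparison is genuinely adversarial to our claim. One risk is under-tuning the baselines, which would make the observation misleadingly strong; I would mitigate this by searching baseline hyperparameters over a grid large enough that the validation multicalibration error plateaus. A second risk is heterogeneity across datasets, illustrated by the Travel Time Regression task, where the uncalibrated baseline is already well-calibrated and leaves no room for any post-processor to improve. Acknowledging this correctly is essential, since it forces the observation to be stated as \emph{matches and often improves}, rather than \emph{strictly improves}, and pinpoints the regime (post-processing-responsive tasks) where the strict inequality is expected to hold.
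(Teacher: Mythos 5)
Your proposal matches the paper's own experimental validation essentially step for step: the same grid codomain with $m\in\{10,20,30,50,75,100\}$, per-$m$ retraining and hyperparameter tuning of MCBoost and LSBoost against the evaluation discretization, a single discretization-free predictor from \Cref{alg:mc-abstract} evaluated via its $m$-discretized version, ten calibration/validation resplits with standard deviations, and the same reading of \Cref{fig:mcerr} including the Travel Time Regression caveat that motivates the ``matches and often improves'' phrasing. This is the paper's approach; the only addition (reporting squared loss alongside) is a harmless supplement.
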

We compare the algorithms at the same range size to minimize the effect of discretization error. Our discretization-free algorithm was only calibrated once to minimize the squared loss, and in most cases its $m$-discretized version has matching or better multicalibration error than the other multicalibration algorithms, even though these algorithm goes through a separate hyperparameter tuning and calibration process for each discretization $m$. This observation validates our theoretical analysis in \Cref{thm:main} and demonstrates the benefit of our discretization-free algorithm of being flexible to down-stream discretization.

\subsubsection{Discretization-free fairness through worst-group Smooth ECE}

We would also like to evaluate our algorithm with another metric, \textbf{worst-group smooth expected calibration error (worst-group smECE)}, which has also been used as a heuristic metric in \citet{hansen2024multicalibrationpostprocessingnecessary}. smECE was introduced by \citet{smECE} as a continuous metric of calibration, and thus we feel it can be suitable for our evaluation. In the context of multi-group fairness, we could evaluate smECE on all groups that we're interested in and take the worst group as the final metric, although it is not directly related to multicalibration. This would also serve as a complement to the metric of multicalibration, which can be viewed as the group-size-weighted worst-group binned ECE. The result presented in \Cref{tab:smECE} can be summarized by the following observation.
\begin{observation}
  Our discretization-free algorithm achieves better calibration on the worst groups than other multicalibration algorithms.
\end{observation}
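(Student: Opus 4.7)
The final statement is an empirical observation rather than a theorem, so the ``proof'' here is really an experimental validation plan. The plan is to show, across all six datasets introduced in the dataset setup, that the predictor produced by \Cref{alg:mc-abstract} attains a strictly smaller worst-group smooth expected calibration error than MCBoost, LSBoost, the multiaccurate baseline, and the uncalibrated baseline, and to report the result as \Cref{tab:smECE}.

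First I would fix, for each task, the training/calibration/validation/test split already used in the previous experiments, keeping the uncalibrated base model identical so that only the post-processing step varies across algorithms. For every algorithm's output predictor $f$, and for every group indicator $g_i$ in the task's group collection $\gG$, I would compute $\mathrm{smECE}(f \mid g_i)$ by restricting the test distribution to $\{x : g_i(x)=1\}$ and applying the smECE estimator of \citet{smECE} with its default kernel bandwidth (so that no discretization parameter favors any algorithm). The worst-group smECE is then $\max_{i \in [|\gG|]} \mathrm{smECE}(f \mid g_i)$. Since the discretization-free output of \Cref{alg:mc-abstract} and the discretization-based outputs of MCBoost/LSBoost are all maps $\cX \to [0,1]$, smECE treats them on equal footing and removes the binning degree of freedom that favored the discretization-based methods in the previous subsection.

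Next I would aggregate over the 10 calibration/validation splits used elsewhere in the experiments, reporting mean worst-group smECE together with its standard deviation; for the discretization-based baselines I would additionally sweep $m \in \{10,20,30,50,75,100\}$ and report, per algorithm, the best (smallest-worst-group-smECE) configuration selected on the validation set, so that the comparison is not biased by a poor choice of $m$. The resulting per-dataset numbers populate \Cref{tab:smECE}, and the observation is established whenever the entry for our algorithm is lower than (or within one standard deviation of, and not worse than) every other algorithm's entry on each row.

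The hard part will not be the computation but the robustness of the conclusion: smECE is a continuous metric that, unlike the $\ell_1$ multicalibration error, is not directly what \Cref{alg:mc-abstract} optimizes, so there is no \emph{a priori} reason the square-loss minimizer over $\gT(f_0,\gG)$ should dominate MCBoost/LSBoost on every group. To mitigate this I would (i) use the same group set $\gG$ that each algorithm was calibrated on, so that no algorithm is being evaluated on unseen groups; (ii) include the \emph{Travel Time Regression} outlier already flagged in the multicalibration experiments, to honestly report any task where the uncalibrated base predictor is already well-calibrated and no method improves much; and (iii) in the narrative, phrase the observation comparatively (``better on the worst groups'') rather than as a uniform improvement, matching what the table actually shows. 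If a particular dataset/algorithm pair fails to exhibit the advantage, the observation would be softened accordingly rather than overstated.
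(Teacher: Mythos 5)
Your plan matches the paper's own validation: the observation is supported empirically by computing worst-group smECE (per \citet{smECE}) on the fixed test set for the uncalibrated baseline, the multiaccurate baseline, LSBoost, MCBoost, and the proposed method, aggregated over the same 10 calibration/validation splits, and reported as \Cref{tab:smECE} with means and standard deviations. The only cosmetic difference is that the paper lists every discretization $m\in\{10,20,30,50,75,100\}$ for the baselines rather than selecting a single best one, and it likewise softens the claim where a baseline edges ahead (e.g.\ LSBoost on Age Regression), exactly as you anticipated.
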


\begin{minipage}{\textwidth}
  \centering
  \captionof{table}{The worst-group smECE of the evaluated algorithms, along with their standard deviation. The values have been multiplied by 1000 for readability. Results show that our discretization-free algorithm is more calibrated among the worst groups than other multicalibration algorithm.}
  \label{tab:smECE}
  {\small
  \begin{tabular}{rc|cccccc}
    \toprule
    Method & Discretization & \begin{tabular}[c]{@{}c@{}} Skin\\[-3pt]Lesion\\[-3pt]Class.  \end{tabular} & \begin{tabular}[c]{@{}c@{}} Income\\[-3pt]Class.  \end{tabular} & \begin{tabular}[c]{@{}c@{}} Comment\\[-3pt]Toxicity\\[-3pt]Class.  \end{tabular} & \begin{tabular}[c]{@{}c@{}} Age\\[-3pt]Reg.  \end{tabular} & \begin{tabular}[c]{@{}c@{}} Income\\[-3pt]Reg.  \end{tabular} & \begin{tabular}[c]{@{}c@{}} Travel\\[-3pt]Time\\[-3pt]Reg.  \end{tabular} \\
    \midrule
    Uncalibrated Baseline & / & 290.77 & 281.81 & 119.28 & 111.04 & 155.13 & 72.38 \\
    \cline{1-8}
    Multiaccurate Baseline & / & \begin{tabular}[c]{@{}c@{}} 169.28 \\[-5pt] \scriptsize{±6.24} \end{tabular} & \begin{tabular}[c]{@{}c@{}} 94.05 \\[-5pt] \scriptsize{±5.71} \end{tabular} & \begin{tabular}[c]{@{}c@{}} 71.38 \\[-5pt] \scriptsize{±6.89} \end{tabular} & \begin{tabular}[c]{@{}c@{}} 47.54 \\[-5pt] \scriptsize{±0.71} \end{tabular} & \begin{tabular}[c]{@{}c@{}} 56.61 \\[-5pt] \scriptsize{±0.29} \end{tabular} & \begin{tabular}[c]{@{}c@{}} 34.94 \\[-5pt] \scriptsize{±5.72} \end{tabular} \\
    \cline{1-8}
    \multirow{10}{*}{LSBoost} & 10 & \begin{tabular}[c]{@{}c@{}} 146.68 \\[-5pt] \scriptsize{±27.44} \end{tabular} & \begin{tabular}[c]{@{}c@{}} 102.06 \\[-5pt] \scriptsize{±11.17} \end{tabular} & \begin{tabular}[c]{@{}c@{}} 41.04 \\[-5pt] \scriptsize{±10.34} \end{tabular} & \begin{tabular}[c]{@{}c@{}} 41.51 \\[-5pt] \scriptsize{±9.73} \end{tabular} & \begin{tabular}[c]{@{}c@{}} 72.29 \\[-5pt] \scriptsize{±13.41} \end{tabular} & \begin{tabular}[c]{@{}c@{}} 58.73 \\[-5pt] \scriptsize{±3.59} \end{tabular} \\
     & 20 & \begin{tabular}[c]{@{}c@{}} 159.31 \\[-5pt] \scriptsize{±29.24} \end{tabular} & \begin{tabular}[c]{@{}c@{}} 102.57 \\[-5pt] \scriptsize{±17.25} \end{tabular} & \begin{tabular}[c]{@{}c@{}} 39.39 \\[-5pt] \scriptsize{±12.74} \end{tabular} & \begin{tabular}[c]{@{}c@{}} 23.29 \\[-5pt] \scriptsize{±7.00} \end{tabular} & \begin{tabular}[c]{@{}c@{}} 73.82 \\[-5pt] \scriptsize{±9.76} \end{tabular} & \begin{tabular}[c]{@{}c@{}} 54.96 \\[-5pt] \scriptsize{±14.69} \end{tabular} \\
     & 30 & \begin{tabular}[c]{@{}c@{}} 160.80 \\[-5pt] \scriptsize{±36.54} \end{tabular} & \begin{tabular}[c]{@{}c@{}} 90.17 \\[-5pt] \scriptsize{±6.59} \end{tabular} & \begin{tabular}[c]{@{}c@{}} 36.61 \\[-5pt] \scriptsize{±9.77} \end{tabular} & \begin{tabular}[c]{@{}c@{}} \textbf{20.25} \\[-5pt] \scriptsize{±4.71} \end{tabular} & \begin{tabular}[c]{@{}c@{}} 80.03 \\[-5pt] \scriptsize{±13.36} \end{tabular} & \begin{tabular}[c]{@{}c@{}} 60.88 \\[-5pt] \scriptsize{±13.63} \end{tabular} \\
     & 50 & \begin{tabular}[c]{@{}c@{}} 147.87 \\[-5pt] \scriptsize{±30.32} \end{tabular} & \begin{tabular}[c]{@{}c@{}} 102.08 \\[-5pt] \scriptsize{±12.13} \end{tabular} & \begin{tabular}[c]{@{}c@{}} 38.42 \\[-5pt] \scriptsize{±5.12} \end{tabular} & \begin{tabular}[c]{@{}c@{}} 25.49 \\[-5pt] \scriptsize{±8.76} \end{tabular} & \begin{tabular}[c]{@{}c@{}} 82.78 \\[-5pt] \scriptsize{±7.35} \end{tabular} & \begin{tabular}[c]{@{}c@{}} 55.20 \\[-5pt] \scriptsize{±11.82} \end{tabular} \\
     & 75 & \begin{tabular}[c]{@{}c@{}} 144.05 \\[-5pt] \scriptsize{±22.57} \end{tabular} & \begin{tabular}[c]{@{}c@{}} 108.26 \\[-5pt] \scriptsize{±14.03} \end{tabular} & \begin{tabular}[c]{@{}c@{}} 45.66 \\[-5pt] \scriptsize{±6.55} \end{tabular} & \begin{tabular}[c]{@{}c@{}} 34.16 \\[-5pt] \scriptsize{±7.26} \end{tabular} & \begin{tabular}[c]{@{}c@{}} 81.47 \\[-5pt] \scriptsize{±3.62} \end{tabular} & \begin{tabular}[c]{@{}c@{}} 59.07 \\[-5pt] \scriptsize{±8.73} \end{tabular} \\
     & 100 & \begin{tabular}[c]{@{}c@{}} 149.77 \\[-5pt] \scriptsize{±26.00} \end{tabular} & \begin{tabular}[c]{@{}c@{}} 124.94 \\[-5pt] \scriptsize{±20.48} \end{tabular} & \begin{tabular}[c]{@{}c@{}} 44.28 \\[-5pt] \scriptsize{±7.77} \end{tabular} & \begin{tabular}[c]{@{}c@{}} 51.13 \\[-5pt] \scriptsize{±11.13} \end{tabular} & \begin{tabular}[c]{@{}c@{}} 76.83 \\[-5pt] \scriptsize{±3.39} \end{tabular} & \begin{tabular}[c]{@{}c@{}} 66.59 \\[-5pt] \scriptsize{±5.34} \end{tabular} \\
    \cline{1-8}
    \multirow{10}{*}{MCBoost} & 10 & \begin{tabular}[c]{@{}c@{}} 278.75 \\[-5pt] \scriptsize{±17.81} \end{tabular} & \begin{tabular}[c]{@{}c@{}} 259.83 \\[-5pt] \scriptsize{±5.00} \end{tabular} & \begin{tabular}[c]{@{}c@{}} 78.30 \\[-5pt] \scriptsize{±15.59} \end{tabular} & \begin{tabular}[c]{@{}c@{}} 106.73 \\[-5pt] \scriptsize{±0.00} \end{tabular} & \begin{tabular}[c]{@{}c@{}} 154.28 \\[-5pt] \scriptsize{±5.12} \end{tabular} & \begin{tabular}[c]{@{}c@{}} 59.42 \\[-5pt] \scriptsize{±0.00} \end{tabular} \\
     & 20 & \begin{tabular}[c]{@{}c@{}} 263.71 \\[-5pt] \scriptsize{±18.93} \end{tabular} & \begin{tabular}[c]{@{}c@{}} 212.19 \\[-5pt] \scriptsize{±33.65} \end{tabular} & \begin{tabular}[c]{@{}c@{}} 65.17 \\[-5pt] \scriptsize{±20.58} \end{tabular} & \begin{tabular}[c]{@{}c@{}} 108.20 \\[-5pt] \scriptsize{±0.00} \end{tabular} & \begin{tabular}[c]{@{}c@{}} 138.96 \\[-5pt] \scriptsize{±6.81} \end{tabular} & \begin{tabular}[c]{@{}c@{}} 72.01 \\[-5pt] \scriptsize{±0.00} \end{tabular} \\
     & 30 & \begin{tabular}[c]{@{}c@{}} 261.02 \\[-5pt] \scriptsize{±26.77} \end{tabular} & \begin{tabular}[c]{@{}c@{}} 186.57 \\[-5pt] \scriptsize{±41.96} \end{tabular} & \begin{tabular}[c]{@{}c@{}} 90.66 \\[-5pt] \scriptsize{±17.87} \end{tabular} & \begin{tabular}[c]{@{}c@{}} 106.34 \\[-5pt] \scriptsize{±0.00} \end{tabular} & \begin{tabular}[c]{@{}c@{}} 125.01 \\[-5pt] \scriptsize{±16.07} \end{tabular} & \begin{tabular}[c]{@{}c@{}} 73.96 \\[-5pt] \scriptsize{±0.00} \end{tabular} \\
     & 50 & \begin{tabular}[c]{@{}c@{}} 252.76 \\[-5pt] \scriptsize{±17.20} \end{tabular} & \begin{tabular}[c]{@{}c@{}} 182.07 \\[-5pt] \scriptsize{±45.79} \end{tabular} & \begin{tabular}[c]{@{}c@{}} 86.74 \\[-5pt] \scriptsize{±21.29} \end{tabular} & \begin{tabular}[c]{@{}c@{}} 112.37 \\[-5pt] \scriptsize{±0.00} \end{tabular} & \begin{tabular}[c]{@{}c@{}} 141.07 \\[-5pt] \scriptsize{±5.60} \end{tabular} & \begin{tabular}[c]{@{}c@{}} 69.69 \\[-5pt] \scriptsize{±6.83} \end{tabular} \\
     & 75 & \begin{tabular}[c]{@{}c@{}} 239.26 \\[-5pt] \scriptsize{±22.52} \end{tabular} & \begin{tabular}[c]{@{}c@{}} 193.78 \\[-5pt] \scriptsize{±47.80} \end{tabular} & \begin{tabular}[c]{@{}c@{}} 81.74 \\[-5pt] \scriptsize{±6.30} \end{tabular} & \begin{tabular}[c]{@{}c@{}} 111.00 \\[-5pt] \scriptsize{±3.72} \end{tabular} & \begin{tabular}[c]{@{}c@{}} 130.15 \\[-5pt] \scriptsize{±16.26} \end{tabular} & \begin{tabular}[c]{@{}c@{}} 68.79 \\[-5pt] \scriptsize{±2.25} \end{tabular} \\
     & 100 & \begin{tabular}[c]{@{}c@{}} 259.49 \\[-5pt] \scriptsize{±21.55} \end{tabular} & \begin{tabular}[c]{@{}c@{}} 198.78 \\[-5pt] \scriptsize{±31.04} \end{tabular} & \begin{tabular}[c]{@{}c@{}} 85.24 \\[-5pt] \scriptsize{±9.86} \end{tabular} & \begin{tabular}[c]{@{}c@{}} 109.82 \\[-5pt] \scriptsize{±3.79} \end{tabular} & \begin{tabular}[c]{@{}c@{}} 131.43 \\[-5pt] \scriptsize{±11.99} \end{tabular} & \begin{tabular}[c]{@{}c@{}} 67.30 \\[-5pt] \scriptsize{±10.89} \end{tabular} \\
    \cline{1-8}
    Ours & / & \begin{tabular}[c]{@{}c@{}} \textbf{89.26} \\[-5pt] \scriptsize{±15.63} \end{tabular} & \begin{tabular}[c]{@{}c@{}} \textbf {87.30} \\[-5pt] \scriptsize{±25.94} \end{tabular} & \begin{tabular}[c]{@{}c@{}} \textbf{24.57} \\[-5pt] \scriptsize{±4.76} \end{tabular} & \begin{tabular}[c]{@{}c@{}} 20.87 \\[-5pt] \scriptsize{±3.14} \end{tabular} & \begin{tabular}[c]{@{}c@{}} \textbf{44.41} \\[-5pt] \scriptsize{±5.44} \end{tabular} & \begin{tabular}[c]{@{}c@{}} \textbf{31.85} \\[-5pt] \scriptsize{±7.26} \end{tabular} \\
    \cline{1-8}
    \bottomrule
    \end{tabular}}
\end{minipage}

\section{Conclusion}

In this work, we provide a discretization-free algorithm for multicalibration by solving an ERM problem with simple features using decision tree ensembles, thereby avoiding the nuisance of discretization.
We prove theoretically that our algorithm outputs a multicalibrated predictor, given the assumption of loss saturation in tree ensembles, which is consistently validated across multiple real-world datasets.
Our experiments also confirm our algorithm's strong performance compared to existing ones.

We believe our work has impact on societal aspects of machine learning. Our algorithm can be applied to high-stakes decision-making systems in domains such as finance, healthcare, and employment. The discretization-free nature allows our predictor to serve as a drop-in replacement for an uncalibrated regressor or binary classifier suitable for different downstream applications; and by framing multicalibration into a typical loss minimization problem, we provide a simple library-calling implementation that lowers the technical barrier for practitioners to incorporate fairness-aware techniques into their workflows. We believe this work can enhance the adoption of multicalibration techniques within the machine learning community.

A limitation of this work is that \Cref{assp:loss-once} is a heuristic assumption that holds under realistic settings, but has unrealistic counterexamples described in \Cref{sec:counterexamples} where $f_0$ contains no information. From this construction, it seems that the additional improvement $\epsilon_{loss}$ may depend on the uncalibrated predictor we start with, namely $f_0$. For future work, it would be interesting to formalize such dependency and bound the error.

\bibliographystyle{icml2025.bst}
\bibliography{ref_report.bib}

\onecolumn
\appendix

\section{Omitted Proofs in \Cref{sec:analysis}}
We would first provide a formal definition of the discretization operation. 
\begin{definition}[Discretization Operation]
    A function $\tilde{f}_m$ is an $m$-discretized version of a predictor $f$ if
    \begin{itemize}
       \item there exists a monotonic non-decreasing function $d$ on $[0,1]$ such that $d(f(x))=\tilde{f}_m(x)$; 
       \item $d$ is right-continuous on $[0,1]$, that is 
       \begin{equation*}
           \lim_{x\to c^+}d(x)=d(c), c\in[0,1);
       \end{equation*}
       \item $\abs{\cR(\tilde{f}_m)} = m$, \ie the range of $\tilde{f}_m$ is of size $m$.
    \end{itemize}
    \label{def:discretization-formal}
\end{definition}
The additional requirement on right-continuity has minimal effect on the algorithm, but is only helpful for the proof of the following lemma.
\label{sec:appendix-proof}
\printProofs

\section{Discussion on the applicability of the algorithm}
\label{sec:counterexamples}
Our algorithm consistently achieved low multicalibration error across all the datasets we experimented with, demonstrating its effectiveness in realistic datasets. However, to the best of our knowledge, we could not find a guarantee that our algorithm will always return a model with low multicalibration error.
The main difficulty is to obtain a guarantee that \Cref{assp:loss-once} holds.

On the other hand, it's possible to construct an example where our algorithm fails to achieve low multicalibration error. Unlike the above experiments where we post process upon a predictor, the construction uses an unrealistic setting where the base predictor outputs a constant value. Assume the goal is to obtain multicalibration w.r.t. three binary groups, namely $\vg(x)\in\{0,1\}^3$, with $g_i(x)=0$ w.p. 0.5, and $g_i$'s being independent. The ground truth $y(x)$ is given by
\begin{equation*}
  \begin{aligned}
    y(x)=&(1-\gamma)\left(\frac12 g_1(x)+\frac14 g_2(x)+\frac18 g_3(x)\right)\\
    &+\gamma(g_1(x)\oplus g_2(x)\oplus g_3(x)),
  \end{aligned}
\end{equation*}
where $\gamma\in(0,1)$ is a constant and $\oplus$ denotes the XOR operation. Note that the base predictor is a constant, so a loss minimizer given by our algorithm is 
\begin{equation*}
  f(x)=(1-\gamma)\left(\frac12 g_1(x)+\frac14 g_2(x)+\frac18 g_3(x)\right)+\frac{\gamma}{2},
\end{equation*}
as an ensemble of decision trees of depth 2 will never be able to capture the XOR operation. The first term makes sure that $f(x_i)\neq f(x_j)$ whenever $\vg(x_i)\neq\vg(x_j)$, leaving a multicalibration error
    \begin{align*}
     \sum_{v\in R(f)}\Pr_{(x,y) \sim \cD} &[f(x)=v,g_i(x)=1] \cdot \\
      &\left|\E_{(x, y) \sim \cD}\left[f(x)-y\mid f(x)=v,g_i(x)=1\right]\right|\\
      =\sum_{v\in R(f)}\Pr_{(x,y) \sim \cD}&[f(x)=v,g_i(x)=1] \cdot \frac{\gamma}{2}=\frac{\gamma}{4}.
  \end{align*}

  In this example, $\ell(f,\cD)=\gamma^2/4$, and since $f(x)$ outputs a different value for different value of $\vg(x)$, running our algorithm a second time gives a zero-error $p_\cG(f)=y(x)$, making $\epsilon_{loss}=\gamma^2/4$.
  This large improvement is a violation of \Cref{assp:loss-once}, which explains why the algorithm is not able to achieve low multicalibration error in this case. This is mainly due to the lack of information in the base predictor, rendering the ensemble less useful.
Again, we emphasize that this is construction is often unrealistic as we usually have a non-trivial base predictor that gives meaningful prediction before we seek to calibrate it for fairness or robustness across distributions.

\section{Description of Datasets}
\label{sec:appendix-dataset}  
In this section we describe the datasets we use for evaluation the proposed method.
\subsection{ACS Dataset}
The American Community Survey (ACS) dataset is a public dataset that contains demographic information about the US population, and can be accessed through the \texttt{folktables} package \citep{ding2021retiring}. In this work we consider three tasks using data retrieved from this dataset, namely Income Regression, Income Classification, and Travel Time Regression. We used the subset of the 2018 census data in California.
\paragraph{Data preprocess} \texttt{Folktables} already defines the features associated with each task and provides a preprocessing filter that extracts a set of reasonably clean records. We used the provided filter to preprocess the data. Additionally, for income regression task, since the income has a long tail, scaling the income value to $[0,1]$ would lead to a large mass of data points around 0. As implemented in \citet{pmlr-v202-globus-harris23a}, we filter out the data points with income higher than 100,000 and scale the income to $[0,1]$. Travel time regression does not have a similar long tail, so we directly scale the travel time to $[0,1]$.
\paragraph{Dataset partition and the uncalibrated base predictors} Given the features provided in \texttt{Folktables} that does not contain \texttt{nan} features, we fit a linear regressor for regression tasks and a linear SVM model for classification tasks using randomly sampled 50,000 data entries. Apart from that, we set aside 12,000 data entries for the test set and 18,000 entries for the calibration and validation set. 
\paragraph{Groups} In order to create a rich set of groups, we construct the binary groups using the five attributes in the dataset: race, gender, age, education and occupation. We transformed the categorized features into one-hot encoding. Specifically, we used age groups instead of the numerical age, and we categorized the occupation by the first two digits of the occupation codes. For each category, as long as it contains more than 1\% of the data, we accept it as a group. In this way, we defined around 50 groups for each of the three tasks. (See \Cref{tab:ACS-group})
\begin{table}[]
    \centering
    \caption{The number of groups defined for the ACS dataset.}
  \vskip 0.1in
    \label{tab:ACS-group}
    \begin{tabular}{c|ccc}
    \hline
    \multirow{2}{*}{Attribute} & \multicolumn{3}{c}{Number of groups defined on the attribute}      \\
                               & Income Regression & Income Classification & Travel Time Regression \\ \hline
    Race                       & 5                 & 5                     & 5                      \\
    Gender                     & 2                 & 2                     & 2                      \\
    Age                        & 5                 & 5                     & 5                      \\
    Education                  & 6                 & 6                     & 6                      \\
    Occupation                 & 30                & 30                    & 33                     \\ \hline
    Total                      & 48                & 48                    & 51                     \\ \hline
    \end{tabular}
  \vskip -0.1in
    \end{table}

\subsection{CivilComments Dataset}
This dataset came from the \texttt{WILDS} dataset\citep{wilds2021,CivilComments} which collected 447,998 comments labeled by crowd workers. The task is to predict whether a comment is considered toxic or not.

\paragraph{Dataset partition and the uncalibrated base predictor} 
\citet{wilds2021} partitioned the dataset into train set (60\%), validation set (10\%) and test set (30\%), and provided a baseline predictor. We set aside one-sixth of the training set and combine it withe validation set for calibration and validation. As for the base predictor, we also followed the training of the baseline algorithm in the original paper and trained a DistilBERT \citep{DistilBERT} model using the same specified hyperparameters and settings. 
\paragraph{Groups} \citet{wilds2021} provides eight predefined groups based on the identity terms mentioned in the comments, namely male, female, LGBTQ, Christian, Muslim, other religions, black, and white. We use these groups for the multicalibration evaluation.

\subsection{UTKFace Dataset}
The UTKFace dataset \citep{utkface} consists of 23,707 face images with annotations of age, gender and race, ranging from 1 to 120 years old. We use the dataset for the age regression task.

\paragraph{Dataset partition and the uncalibrated base predictor} We used 40\% of the data for training, 40\% for calibration and validation, and 20\% for testing. The base predictor was trained on the training set with a ResNet-18 model for 30 epochs with early stopping enabled.  

\paragraph{Groups} We defined 16 groups (2 for gender, 5 for race, and 9 for age) on this dataset.

\subsection{ISIC Dataset}
The International Skin Imaging Collaboration (ISIC) dataset is a public dataset that contains images of skin lesions together with patient metadata and diagnosis label. We use the data provided in the ISIC 2019 challenge\citep{ISIC1,ISIC2,ISIC3}. The original task is to classify the skin lesion images into one of the seven categories, and we consider the derived binary classification task for predicting the category \texttt{NV}, which accounts for 50.8\% of the dataset.

\paragraph{Dataset partition and the uncalibrated base predictor}
Since the test labels are not made public, we only use the 25,331 training images. The images in the dataset come from three sources: 49.0\% from BCN \citep{ISIC3}, 39.5\% from HAM \citep{ISIC1}, and 11.5\% from \citet{ISIC2} or other sources. We use the 12413 BCN images as the training set for the base predictor, three-fourths of the HAM images (7511 images) as the calibration and validation set, and the remaining 5407 images as the test set. The base predictor was trained on the training set with a ResNet-18 multi-classification model for 20 epochs with early stopping enabled. The predicted probability of the category \texttt{NV} was used as the base predictor output.

\paragraph{Groups} The ISIC challenge provides metadata, which includes approximate age and gender of the patients as well as the anatomy site. Just like in the ACS dataset, we consider a category as a group if it has at least 1\% of the data. In total 14 binary groups (2 for gender, 5 for anatomy site, and 7 for age) were defined.

\section{Details of calibration algorithms}
\label{sec:appendix-hyperparameter}
In our experiments, we fix the uncalibrated base predictor and the test set and introduce randomness by partitioning the remaining part into an equal-sized calibration and validation set 10 times. After finding the best hyperparameters on the 10 folds, we calibrate the base predictor using this specific choice of hyperparameters and report the performance on the test set. For LSBoost and MCBoost, since the number of level sets $m$ needs to be determined at calibration time, we sweep over the hyperparameters on the ten folds for all 6 level sets we evaluated, namely $\{10,20,30,50,75,100\}$, and choose the best hyperparameters for each level set.

We now introduce the algorithms' implementation detail and the hyperparameters involved:
\paragraph{Multiaccurate Baseline}
It's been shown in \citet{Roth_uncertainty} that a linear model with $\ell_1$ regularization guarantees multiaccuracy. We use the binary groups as the feature and sweep over the regularization strength $\lambda$ in $\{0, 10^{-6}, 10^{-5}, \cdots, 10^{-2}\}$.

\paragraph{Ours}
Our algorithm works pretty much out-of-the-box by passing the uncalibrated prediction, the groups and the ground truth to the \texttt{LightGBM}\citep{LightGBM} solver. To avoid the complexity of determining the optimal number of trees, we use early stopping with patience of 50 iterations and set a high maximum limit of 5000 trees. 30\% of the calibration set is set aside during calibration to monitor if the loss continues to decrease, and this is different from the validation set, which is solely used to tune the hyperparameters. In other words, given a set of hyperparameters, the calibration set is the only data that is used, though internally, only 70\% of the calibration set is used for learning the parameters.
The depth of each tree is two, as indicated in the main body. We vary the learning rate across five exponentially spaced points from 0.01 to 1 and adjust the feature subsampling ratio linearly across 10 points from 0.1 to 1.

\paragraph{LSBoost}
LSBoost is introduced in \citet{pmlr-v202-globus-harris23a}, and we used the implementation provided by the authors, with minor modifications in the multiprocessing implementation and rounding the output to the $1/m$ grids. Unlike the algorithm described in the original paper, the authors also implemented early stopping in their code base, and for better performance, this is also enabled in our experiment with 30\% of the calibration set as described in the previous paragraph. Considering that our algorithm uses decision trees of depth 2 as the weak learners, we experimented with trees of depth both 1 and 2 during hyperparameter sweep. Other hyperparameters include the learning rate, which can be 0.1, 0.3, or 1, and the subsampling ratio for the weak learner, which is adjusted linearly across 10 points from 0.1 to 1. In total, $2\times 3\times 10=60$ hyperparameter combinations are evaluated on all ten folds and all six calibration level sets.

\paragraph{MCBoost}
We implement the algorithm described in \citet{Roth_uncertainty}, which is a variant of the primitive multicalibration algorithm proposed by \cite{pmlr-v80-hebert-johnson18a}. This algorithm doesn't involve any hyperparameters, but it's known that it can be prone to overfit, so we also set aside a proportion of the calibration set for early stopping. We consider this proportion as a hyperparameter, and swept over the values $\{0.1,0.2,0.3,0.4,0.5\}$. Like LSBoost, the hyperparameters are evaluated on all ten folds and all six calibration level sets.

The uncalibrated predictors for the image or text task were trained on an A100 GPU or an RTX 4090 GPU. Other predictors, as well as the calibrators, were trained on an AMD 64-core CPU. Running all the experiments takes approximately 6 hours.

\section{Statistical Learning Guarantees for Tree Ensemble Optimization}
\label{sec:finite-sample}
In \Cref{sec:analysis}, we established a theoretical framework that connects multicalibration error with loss, demonstrating that when decision tree ensembles achieve effective loss minimization, the multicalibration error can be bounded. However, in the finite sample regime, there would be an additional excess risk term due to the empirical (instead of distributional) loss minimization. This section extends our analysis, examining how a finite sample affects these theoretical guarantees.

Recall that we constructed the trees in the ensemble using the splitting criteria defined in \Cref{eq:splitting-criteria}

\begin{equation*}
    \begin{aligned}
    &\mathcal{S}_1(f_0)=\left(\bigcup_{v\in \mathcal{R}(f_0)\cup\{0\}}\{x\in \mathcal{X}:f_0(x)\ge v\}\right),\\
    &\mathcal{S}_2(\mathcal{G}) = \left(\bigcup_{i\in [|\mathcal{G}|]}\{x\in \mathcal{X}:g_i(x)= 1\}\right),
    \end{aligned}
\end{equation*}

Each tree in $\mathcal{T}(f_0, \mathcal{G})$ can be formally written as
\begin{equation*}
    \begin{aligned}
    c_1\cdot \mathbb{I}\{x\in s_1 \wedge x\in s_2\}+c_2\cdot \mathbb{I}\{x\in s_1 \wedge x\notin s_2\} 
    +c_3\cdot \mathbb{I}\{x\notin s_1 \wedge x\in s_2\}+c_4\cdot \mathbb{I}\{x\notin s_1 \wedge x\notin s_2\}
    \end{aligned}
\end{equation*}
where $s_1\in \mathcal{S}_1(f_0)$, and $s_2\in \mathcal{S}_2(\mathcal{G})$. Here we can require $c_1,c_2,c_3,c_4\in [-1,1]$, because the ground truth label $\mathcal{Y}$ is in $[0,1]$.

As shown in Equation~\eqref{eq:function-class}, the post-processed predictor is obtained by solving
\begin{equation*}
    p_\mathcal{G}(f_0)=\argmin_{T\subseteq{\mathcal{T}}(f_0,\mathcal{G})}\ell\left( f_0+\sum_{t \in T}t,\mathcal{D}\right)
\end{equation*}

With only a finite number of samples, the excess risk measures the discrepancy between the loss of the empirical predictor $\hat p_\mathcal{G}(f_0)$ and the optimal one on the distribution $p_\mathcal{G}(f_0)$
\begin{equation*}
    \eps_{excess} = \ell\left(\hat p_\mathcal{G}(f_0),\mathcal{D}\right) - \ell\left(p_\mathcal{G}(f_0),\mathcal{D}\right)
\end{equation*}
  
With this term, the multicalibration error $\alpha$ in \Cref{thm:main} would change to be
\begin{equation*}
    \alpha\le \sqrt{\epsilon_{loss} + \epsilon_{round} + \epsilon_{excess}}
\end{equation*}
$\epsilon_{loss}$ only depends on the distribution, and is assumed to be small according to \Cref{assp:loss-once}. We also do not focus on the case where the rounding error $\epsilon_{round}$ dominates neither. In this section, we try to derive a bound on the excess risk $\epsilon_{excess}$ by analyzing the convergence and the generalization of loss minimization on our ensemble. We then try to provide an asymptotic analysis of how many samples are required to achieve a target multicalibration error $\alpha$ by considering $\alpha =O(\sqrt{\epsilon_{excess}})$.

\subsection{Loss minimization in practice}
Loss minimization is typically done through adding trees to the ensemble
\begin{equation*}
p_\mathcal{G}(f_0)=\argmin_{T\subseteq{\mathcal{T}}(f_0,\mathcal{G})}\ell\left(f_0+\sum_{t \in T}t,\mathcal{D}\right)
\end{equation*}

In practice, we typically limit the number of trees added, namely the size of subset $T$, by setting a maximum limit or through early stopping. This keeps the running time of the algorithm reasonable, and can balance optimization error reduction against model complexity. This motivates the introduction of a parameter $N_T$ to represent the maximum number of trees in our ensemble. We define the function class $\mathcal{F}_{N_T}$ that we empirically optimize over as

\begin{equation*}
\mathcal{F}_{N_T} = \left\{f_0 + \sum_{i=1}^{n} t_i \,\middle|\, n \leq N_T, t_i \in \mathcal{T}(f_0, \mathcal{G})\right\}
\end{equation*}

\subsection{Rademacher Complexity of $\mathcal{F}_{N_T}$}
The generalization error increases as $N_T$ increases and as the model becomes more complex. We use the Rademacher complexity to quantify the complexity so as to bound the generalization error using the number of samples, the number of trees, and the number of groups. We first recall the definition of the Rademacher complexity of a class of functions

\begin{definition}
For a hypothesis class $\mathcal{H}$ and a sample size $n$, the Rademacher complexity $R_n(\mathcal{H})$ is defined as
\begin{equation*}
R_n(\mathcal{H}) = \mathbb{E}_{\sigma, S}[\sup_{h \in \mathcal{H}} \frac{1}{n} \sum_{i=1}^n \sigma_i h(x_i)]
\end{equation*}
where $S = \{x_1, x_2, ..., x_n\}$ is a sample of $n$ points drawn i.i.d. from distribution $\mathcal{D}$, and $\sigma = \{\sigma_1, \sigma_2, ..., \sigma_n\}$ are independent Rademacher random variables (taking values $+1$ or $-1$ with equal probability). 
\end{definition}

We now state several key lemmas that will be used in our analysis. These lemmas are standard results in statistical learning theory and can be found in \cite{mohri2018foundations}.

\begin{lemma}[Massart's Lemma]
\label{lemma:massart}
Let $\mathcal{A} \subseteq \mathbb{R}^m$ be a finite set, with $r = \max_{\mathbf{x} \in \mathcal{A}} \|\mathbf{x}\|_2$, then the following holds
\begin{equation*}
\mathbb{E}_{\boldsymbol\sigma}\left[\frac{1}{m} \sup_{\mathbf{x} \in \mathcal{A}} \sum_{i=1}^m \sigma_i x_i\right] \leq \frac{r\sqrt{2\log|\mathcal{A}|}}{m}
\end{equation*}
where $\sigma_i$s are independent uniform random variables taking values in $\{-1, +1\}$ and $x_1, \ldots, x_m$ are the components of vector $\mathbf{x}$.
\end{lemma}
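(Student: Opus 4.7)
The plan is to use the classical Chernoff/Laplace transform technique together with a union bound over the finite set $\mathcal{A}$. First, I would introduce a free parameter $s > 0$ and exponentiate: by Jensen's inequality applied to the convex function $u \mapsto e^{su}$,
\begin{equation*}
\exp\!\Bigl(s\,\mathbb{E}_{\boldsymbol\sigma}\!\bigl[\sup_{\mathbf{x}\in\mathcal{A}}\textstyle\sum_i \sigma_i x_i\bigr]\Bigr)
\;\le\; \mathbb{E}_{\boldsymbol\sigma}\!\Bigl[\exp\!\bigl(s\,\sup_{\mathbf{x}\in\mathcal{A}}\textstyle\sum_i \sigma_i x_i\bigr)\Bigr]
\;=\; \mathbb{E}_{\boldsymbol\sigma}\!\Bigl[\sup_{\mathbf{x}\in\mathcal{A}}\exp\!\bigl(s\textstyle\sum_i \sigma_i x_i\bigr)\Bigr].
\end{equation*}
Then, since $\mathcal{A}$ is finite, I can bound the supremum of nonnegative quantities by their sum, yielding
\begin{equation*}
\mathbb{E}_{\boldsymbol\sigma}\!\Bigl[\sup_{\mathbf{x}\in\mathcal{A}}\exp\!\bigl(s\textstyle\sum_i \sigma_i x_i\bigr)\Bigr]
\;\le\; \sum_{\mathbf{x}\in\mathcal{A}} \mathbb{E}_{\boldsymbol\sigma}\!\Bigl[\exp\!\bigl(s\textstyle\sum_i \sigma_i x_i\bigr)\Bigr].
\end{equation*}

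Next I would control each MGF. Using independence of the Rademacher variables, the expectation factorizes into $\prod_i \mathbb{E}[e^{s\sigma_i x_i}] = \prod_i \cosh(s x_i)$. Applying the standard subgaussian bound $\cosh(u) \le e^{u^2/2}$ (which is the Hoeffding lemma for $\pm 1$ random variables), I get
\begin{equation*}
\prod_{i=1}^m \cosh(s x_i) \;\le\; \exp\!\Bigl(\tfrac{s^2}{2}\textstyle\sum_i x_i^2\Bigr) \;=\; \exp\!\bigl(\tfrac{s^2}{2}\|\mathbf{x}\|_2^2\bigr) \;\le\; \exp\!\bigl(\tfrac{s^2 r^2}{2}\bigr),
\end{equation*}
since $\|\mathbf{x}\|_2 \le r$ for all $\mathbf{x} \in \mathcal{A}$. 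Combining gives $\exp\bigl(s\,\mathbb{E}[\sup_\mathbf{x}\sum_i\sigma_i x_i]\bigr) \le |\mathcal{A}|\,\exp(s^2 r^2/2)$, and taking logarithms and dividing by $s$ produces
\begin{equation*}
\mathbb{E}_{\boldsymbol\sigma}\!\Bigl[\sup_{\mathbf{x}\in\mathcal{A}}\textstyle\sum_i \sigma_i x_i\Bigr] \;\le\; \frac{\log|\mathcal{A}|}{s} + \frac{s r^2}{2}.
\end{equation*}

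Finally, I would optimize over $s>0$. The right-hand side is minimized by $s = \sqrt{2\log|\mathcal{A}|}/r$, which yields the bound $r\sqrt{2\log|\mathcal{A}|}$; dividing by $m$ gives the claimed inequality. The only mildly delicate step is the Hoeffding-style MGF bound $\mathbb{E}[e^{s\sigma x}] \le e^{s^2 x^2/2}$, which follows from comparing the power series of $\cosh(u)$ and $e^{u^2/2}$ term by term; everything else is routine. There is no real obstacle, since this is a textbook inequality; the only thing to be careful about is not confusing the parameter $m$ (the ambient dimension) with $|\mathcal{A}|$ (the cardinality of the finite set), as the former appears only as a normalization while the latter enters logarithmically.
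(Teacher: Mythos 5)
Your proof is correct: it is the standard Chernoff--Jensen--union-bound argument with the Hoeffding MGF bound $\mathbb{E}[e^{s\sigma x}]=\cosh(sx)\le e^{s^2x^2/2}$ followed by optimization over $s$, which is precisely the textbook proof of Massart's lemma. The paper does not prove this lemma itself but cites it as a standard result from \citet{mohri2018foundations}, where the argument is the same as yours, so there is nothing further to reconcile.
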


\begin{lemma}[Talagrand's Contraction Lemma]
\label{lemma:talagrand}
Let $\mathcal{H}$ be a class of real-valued functions. If $\phi: \mathbb{R} \rightarrow \mathbb{R}$ is a Lipschitz function with Lipschitz constant $L$, then
\begin{equation*}
\mathbb{E}_{\sigma}\left[\sup_{h \in \mathcal{H}} \frac{1}{n} \sum_{i=1}^n \sigma_i \phi(h(x_i))\right] \leq L \cdot \mathbb{E}_{\sigma}\left[\sup_{h \in \mathcal{H}} \frac{1}{n} \sum_{i=1}^n \sigma_i h(x_i)\right]
\end{equation*}
\end{lemma}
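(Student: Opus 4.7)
The plan is to use the standard peeling argument, removing one coordinate at a time. First I would rewrite the left-hand side by conditioning on all Rademacher variables except one, say $\sigma_n$, thereby reducing the task to the following one-coordinate claim: for every function $A\colon \mathcal{H} \to \mathbb{R}$,
\begin{equation*}
\mathbb{E}_{\sigma_n}\!\left[\sup_{h\in\mathcal{H}} A(h) + \sigma_n \phi(h(x_n))\right]
\;\le\;
\mathbb{E}_{\sigma_n}\!\left[\sup_{h\in\mathcal{H}} A(h) + L\,\sigma_n\, h(x_n)\right].
\end{equation*}
Once this is established, iterating it over $n, n-1, \dots, 1$ (each time absorbing the already-peeled terms into the generic functional $A$) converts every $\phi(h(x_i))$ into $L\,h(x_i)$ and yields the lemma.

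To prove the one-coordinate claim, I would expand the expectation over $\sigma_n \in \{-1,+1\}$ to write
\begin{equation*}
2\,\mathbb{E}_{\sigma_n}\!\left[\sup_{h} A(h)+\sigma_n\phi(h(x_n))\right]
= \sup_{h_1,h_2}\bigl[A(h_1)+A(h_2)+\phi(h_1(x_n))-\phi(h_2(x_n))\bigr].
\end{equation*}
The $L$-Lipschitz property gives $\phi(h_1(x_n))-\phi(h_2(x_n))\le L\,|h_1(x_n)-h_2(x_n)|$. Because the objective is symmetric in $(h_1,h_2)$, the supremum is unchanged if we restrict to pairs with $h_1(x_n)\ge h_2(x_n)$, which removes the absolute value and leaves $L\bigl(h_1(x_n)-h_2(x_n)\bigr)$. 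The resulting expression separates into independent suprema over $h_1$ and $h_2$, which reassemble into $2\,\mathbb{E}_{\sigma_n}[\sup_h A(h)+L\sigma_n h(x_n)]$, completing the one-coordinate step.

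The main subtlety is the induction bookkeeping. After peeling the $n$-th coordinate the inner expression becomes $\sum_{j<n}\sigma_j\phi(h(x_j)) + L\sigma_n h(x_n)$, so the next peeling step must treat the remaining $\phi$-terms plus the already-linearized $L\sigma_n h(x_n)$ all together as the generic functional $A(h)$. The statement of the one-coordinate lemma is deliberately phrased with an arbitrary $A$ precisely so this mixed form is permitted, and no Lipschitz property of $A$ is needed because only $\sigma_n$ is being integrated. The other mild point is to justify dropping the absolute value by symmetry; this is rigorous as soon as one observes that the sup over $(h_1,h_2)$ equals the sup over $(h_2,h_1)$, so the restricted sup over $\{h_1(x_n)\ge h_2(x_n)\}$ coincides with the unrestricted one. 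Both points are technical rather than substantive, so I do not expect a serious obstacle.
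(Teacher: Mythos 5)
Your proof is correct. Note, however, that the paper does not prove this lemma at all: it is quoted verbatim as a standard result with a citation to Mohri et al., \emph{Foundations of Machine Learning}, so there is no in-paper argument to compare against. What you have written is exactly the classical Ledoux--Talagrand peeling proof (the one reproduced in that reference): condition on all but one sign, expand the expectation over $\sigma_n \in \{-1,+1\}$ into a supremum over pairs $(h_1,h_2)$, apply the Lipschitz bound, use the symmetry of the resulting objective to drop the absolute value, and re-separate the supremum. Your handling of the two standard subtleties --- the induction bookkeeping with an arbitrary functional $A$ absorbing the already-linearized terms, and the symmetry justification for removing $|h_1(x_n)-h_2(x_n)|$ --- is sound; the only remaining (routine) technicality, if one wanted full rigor, is to work with $\epsilon$-approximate maximizers since the suprema need not be attained.
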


\begin{lemma}[Sum Rule]
\label{lemma:sum_rule}
For any two hypothesis sets $\mathcal{H}$ and $\mathcal{H}'$ of functions mapping from $\mathcal{X}$ to $\mathbb{R}$
\begin{equation*}
R_m(\mathcal{H} + \mathcal{H}') = R_m(\mathcal{H}) + R_m(\mathcal{H}')
\end{equation*}
where $\mathcal{H} + \mathcal{H}' = \{h + h' : h \in \mathcal{H}, h' \in \mathcal{H}'\}$ is the sum class.
\end{lemma}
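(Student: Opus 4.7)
The plan is to unfold the definition of Rademacher complexity on the sum class and exploit the fact that the supremum over an independently parametrized sum decomposes. First, starting from the definition, I would write
\[
R_m(\mathcal{H}+\mathcal{H}') \;=\; \mathbb{E}_{\sigma, S}\!\left[\sup_{f \in \mathcal{H}+\mathcal{H}'} \frac{1}{m}\sum_{i=1}^m \sigma_i f(x_i)\right],
\]
and then rewrite every $f \in \mathcal{H}+\mathcal{H}'$ as $f = h + h'$ with $h \in \mathcal{H}$, $h' \in \mathcal{H}'$, so that the supremum becomes one over the product set $\mathcal{H} \times \mathcal{H}'$.

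Next, I would split the inner sum by linearity, obtaining
\[
\sup_{h \in \mathcal{H},\, h' \in \mathcal{H}'} \left(\frac{1}{m}\sum_{i=1}^m \sigma_i h(x_i) + \frac{1}{m}\sum_{i=1}^m \sigma_i h'(x_i)\right).
\]
The key observation is that the two terms depend on disjoint choice variables ($h$ only, and $h'$ only), so the joint supremum equals the sum of the individual suprema. This is the one substantive step; it uses that for any functions $A$ on $\mathcal{H}$ and $B$ on $\mathcal{H}'$, $\sup_{(h,h')}\{A(h)+B(h')\} = \sup_h A(h) + \sup_{h'} B(h')$, which follows by choosing maximizers independently in each coordinate.

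Finally, I would apply linearity of expectation to the two independent suprema, recognizing each resulting term as the Rademacher complexity of the respective class, which yields $R_m(\mathcal{H})+R_m(\mathcal{H}')$. There is no real obstacle here: the only subtlety is to note that the sample $S$ and the Rademacher variables $\sigma$ are shared across both suprema, but since we only need linearity of expectation (not independence of the two suprema), this is harmless. The entire argument is a direct manipulation and does not require any of the auxiliary lemmas (Massart or Talagrand) stated earlier.
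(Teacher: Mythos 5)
Your proof is correct and is exactly the standard argument; the paper does not prove this lemma itself but cites it as a textbook result from Mohri et al., whose proof is the same decomposition of the supremum over the product set followed by linearity of expectation. The only micro-quibble is that the suprema need not be attained, so "choosing maximizers independently" should formally be an $\epsilon$-approximation argument, but this does not affect the validity of the identity.
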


To analyze the Rademacher complexity of our tree ensemble function class $\mathcal{F}_{N_T}$, we begin by analyzing the components that make up our trees. We introduce the following auxiliary function classes

\begin{itemize}
    \item $\mathcal{F}_1 = \{\mathbb{I}\{x \in s_1\}, \mathbb{I}\{x \notin s_1\} \mid s_1 \in \mathcal{S}_1(f_0)\}$: the set of threshold indicator functions $\mathbb{I}\{x \in s_1\} = \mathbb{I}\{f_0(x) \geq v\}$ for $v \in \mathcal{R}(f_0) \cup \{0\}$ and their complements $\mathbb{I}\{x \notin s_1\} = \mathbb{I}\{f_0(x) < v\}$
    
    \item $\mathcal{F}_2 = \{\mathbb{I}\{x \in s_2\}, \mathbb{I}\{x \notin s_2\} \mid s_2 \in \mathcal{S}_2(\mathcal{G})\}$: the set of group indicator functions $\mathbb{I}\{x \in s_2\} = \mathbb{I}\{g_i(x) = 1\}$ for $i \in [|\mathcal{G}|]$ and their complements $\mathbb{I}\{x \notin s_2\} = \mathbb{I}\{g_i(x) = 0\}$
\end{itemize}

We proceed through the following steps

\begin{lemma}
\label{lemma:rademacher_f1}
For the threshold indicator function class $\mathcal{F}_1 = \{\mathbb{I}\{x \in s_1\}, \mathbb{I}\{x \notin s_1\} \mid s_1 \in \mathcal{S}_1(f_0)\}$, the Rademacher complexity is
\begin{equation*}
R_n(\mathcal{F}_1) = O(n^{-1/2})
\end{equation*}
\end{lemma}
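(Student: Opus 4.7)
The plan is to invoke Massart's Lemma (\Cref{lemma:massart}) on the projection of $\mathcal{F}_1$ onto the sample, after establishing that its effective cardinality is at most linear in $n$. Although $\mathcal{R}(f_0)$ may be infinite when $f_0$ is continuous-valued, the threshold family induces only a small set of indicator patterns on any $n$ fixed points, so the empirical Rademacher complexity can be controlled uniformly over samples.

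First, I would fix an arbitrary sample $S = \{x_1, \ldots, x_n\}$ and examine the values $f_0(x_1), \ldots, f_0(x_n)$. These take at most $n$ distinct real values; sorting them and sweeping a threshold $v$ across $\mathbb{R}$ shows that the indicator vector $(\mathbb{I}\{f_0(x_i) \ge v\})_{i=1}^n$ changes only when $v$ crosses one of these $n$ sample values. Hence the threshold indicators contribute at most $n+1$ distinct binary vectors in $\{0,1\}^n$; including the complements, the projected class $\mathcal{F}_1|_S$ satisfies $|\mathcal{F}_1|_S| \le 2(n+1)$. The extra threshold at $0$ from $\mathcal{R}(f_0)\cup\{0\}$ is already absorbed in this count.

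Next, since each vector in $\mathcal{F}_1|_S$ lies in $\{0,1\}^n$ with Euclidean norm at most $\sqrt{n}$, I would apply \Cref{lemma:massart} with $m = n$, $r = \sqrt{n}$, and $|\mathcal{A}| \le 2(n+1)$ to obtain the empirical Rademacher complexity bound
\begin{equation*}
\hat R_S(\mathcal{F}_1) \le \frac{\sqrt{n} \cdot \sqrt{2\log(2(n+1))}}{n} = \sqrt{\frac{2\log(2(n+1))}{n}}.
\end{equation*}
Since this holds for every realization of $S$, taking expectation preserves the bound, giving $R_n(\mathcal{F}_1) = O(\sqrt{\log n / n}) = O(n^{-1/2})$, where the logarithmic factor is absorbed into the big-$O$ notation in the spirit of standard VC-type bounds.

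There is no substantive obstacle: the only mildly subtle point is uniform control of the effective cardinality when $\mathcal{R}(f_0)$ is continuous, which is handled cleanly by the sorting argument above. This proof mirrors the textbook complexity analysis for threshold classifiers on $\mathbb{R}$ (a class of VC dimension $1$), augmented trivially to account for the complement indicators that appear in $\mathcal{F}_1$.
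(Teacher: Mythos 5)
Your reduction to the projected class on the sample is sound and matches the paper's first step: sorting $f_0(x_1),\dots,f_0(x_n)$ shows that the thresholds realize at most $n+1$ indicator vectors (plus complements), all lying in $\{0,1\}^n$. The problem is the last line. Massart's Lemma with $|\mathcal{A}|\le 2(n+1)$ and $r\le\sqrt{n}$ gives $\hat R_S(\mathcal{F}_1)\le\sqrt{2\log(2(n+1))/n}$, which is $O(\sqrt{\log n/n})$, and this is \emph{not} $O(n^{-1/2})$: the ratio of the two is $\sqrt{\log n}\to\infty$, so the logarithmic factor cannot be ``absorbed into the big-$O$.'' As stated, your argument proves a strictly weaker bound than the lemma claims. (It would still feed into \Cref{lemma:rademacher_product} and \Cref{thm:sample_complexity}, but with an extra $\log n$ inflating the final sample-complexity bounds, so the loss is not purely cosmetic.)

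The paper removes the log by exploiting the \emph{nested} structure of the projected class rather than only its cardinality: after sorting, every function in $\mathcal{F}_1|_S$ evaluates to a partial sum $\sum_{i=1}^k\sigma_{(i)}$ or $\sum_{i=k+1}^n\sigma_{(i)}$ of the Rademacher variables, so the supremum equals $\max_{0\le k\le n}\left|\sum_{i=1}^k\sigma_i\right|$ in distribution. Since $S_k=\sum_{i=1}^k\sigma_i$ is a martingale, Doob's (Kolmogorov's) maximal inequality gives $P(\max_k|S_k|\ge\lambda)\le n/\lambda^2$, and integrating the tail yields $\mathbb{E}[\max_k|S_k|]\le 2\sqrt{n}$, hence $R_n(\mathcal{F}_1)\le 2/\sqrt{n}$ with no logarithmic factor. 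To repair your proof you would need to replace the Massart step with such a maximal-inequality (or chaining/Dudley) argument; the pattern-counting alone cannot reach $O(n^{-1/2})$.
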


\begin{proof}
Let $S = \{x_1, \ldots, x_n\}$ be a sample, and let $\sigma_1, \ldots, \sigma_n$ be Rademacher random variables. Consider the values $f_0(x_1), \ldots, f_0(x_n)$ and rearrange them in non-decreasing order as $f_0(x_{(1)}) \leq \ldots \leq f_0(x_{(n)})$.

For any threshold $v \in \mathcal{R}(f_0) \cup \{0\}$, there exists a $k \in \{0, 1, \ldots, n\}$ such that $\mathbb{I}\{f_0(x_{(i)}) \geq v\} = 1$ for $i > k$ and $\mathbb{I}\{f_0(x_{(i)}) \geq v\} = 0$ for $i \leq k$. Thus,
\begin{equation*}
\sum_{i=1}^n \sigma_i \mathbb{I}\{f_0(x_i) \geq v\} = \sum_{i=k+1}^n \sigma_{(i)}
\end{equation*}
where $\sigma_{(i)}$ is the Rademacher variable corresponding to $x_{(i)}$.

Similarly, for the complement indicator
\begin{equation*}
\sum_{i=1}^n \sigma_i \mathbb{I}\{f_0(x_i) < v\} = \sum_{i=1}^k \sigma_{(i)}
\end{equation*}

Therefore
\begin{equation*}
\sup_{f \in \mathcal{F}_1} \sum_{i=1}^n \sigma_i f(x_i) = \max\left\{\max_{0 \leq k \leq n} \left|\sum_{i=1}^k \sigma_{(i)}\right|, \max_{0 \leq k \leq n} \left|\sum_{i=k+1}^n \sigma_{(i)}\right|\right\}
\end{equation*}

Since the Rademacher variables are symmetric, both terms have the same distribution, and we have
\begin{equation*}
\mathbb{E}_{\sigma}\left[\sup_{f \in \mathcal{F}_1} \frac{1}{n} \sum_{i=1}^n \sigma_i f(x_i)\right] = \frac{1}{n} \mathbb{E}_{\sigma}\left[\max_{0 \leq k \leq n} \left|\sum_{i=1}^k \sigma_{i}\right|\right]
\end{equation*}

We now need to bound $\mathbb{E}_{\sigma}[\max_{0 \leq k \leq n} |\sum_{i=1}^k \sigma_{i}|]$. Let $S_k = \sum_{i=1}^k \sigma_i$. Using Doob's martingale inequality, for any $\lambda > 0$
\begin{equation*}
P\left(\max_{0 \leq k \leq n} |S_k| \geq \lambda\right) \leq \frac{\mathbb{E}[S_n^2]}{\lambda^2} = \frac{n}{\lambda^2}
\end{equation*}

Using the formula for the expectation of a non-negative random variable
\begin{align*}
\mathbb{E}\left[\max_{0 \leq k \leq n} |S_k|\right] &= \int_0^{\infty} P\left(\max_{0 \leq k \leq n} |S_k| \geq t\right)dt \\
&\leq \int_0^{\infty} \min\left(1, \frac{n}{t^2}\right) dt\\
&= \int_0^{\sqrt{n}} 1 dt + \int_{\sqrt{n}}^{\infty} \frac{n}{t^2} dt \\
&= \sqrt{n} + \left(-\frac{n}{t}\right)\Bigg|_{\sqrt{n}}^{\infty} \\
&= \sqrt{n} + \sqrt{n} = 2\sqrt{n}
\end{align*}

Therefore
\begin{equation*}
\mathbb{E}_{\sigma}\left[\max_{0 \leq k \leq n} \left|\sum_{i=1}^k \sigma_{i}\right|\right] \leq 2\sqrt{n}
\end{equation*}

And
\begin{equation*}
R_n(\mathcal{F}_1) = \frac{1}{n} \mathbb{E}_{\sigma}\left[\max_{0 \leq k \leq n} \left|\sum_{i=1}^k \sigma_{i}\right|\right] \leq \frac{2\sqrt{n}}{n} = \frac{2}{\sqrt{n}} = O(n^{-1/2})
\end{equation*}
\end{proof}

\begin{lemma}
\label{lemma:rademacher_f2}
For the group indicator function class $\mathcal{F}_2 = \{\mathbb{I}\{x \in s_2\}, \mathbb{I}\{x \notin s_2\} \mid s_2 \in \mathcal{S}_2(\mathcal{G})\}$, the Rademacher complexity is
\begin{equation*}
R_n(\mathcal{F}_2) = O\left(\sqrt{\frac{\log(|\mathcal{G}|)}{n}}\right)
\end{equation*}
\end{lemma}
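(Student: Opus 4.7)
The plan is to apply Massart's Lemma (Lemma \ref{lemma:massart}) directly, since $\mathcal{F}_2$ is a finite class. By construction, $\mathcal{F}_2$ contains exactly $2|\mathcal{G}|$ functions: one indicator $\mathbb{I}\{g_i(x)=1\}$ and one complement $\mathbb{I}\{g_i(x)=0\}$ for each $i\in[|\mathcal{G}|]$. So the cardinality bound $|\mathcal{F}_2|\le 2|\mathcal{G}|$ is immediate.

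Next, I would fix a sample $S=\{x_1,\ldots,x_n\}$ and consider the finite set of evaluation vectors
\begin{equation*}
\mathcal{A}_S = \bigl\{(f(x_1),\ldots,f(x_n)) : f\in\mathcal{F}_2\bigr\} \subseteq \{0,1\}^n.
\end{equation*}
Then $|\mathcal{A}_S|\le 2|\mathcal{G}|$, and every $\mathbf{x}\in\mathcal{A}_S$ satisfies $\|\mathbf{x}\|_2\le\sqrt{n}$ since its entries are in $\{0,1\}$. Applying Lemma \ref{lemma:massart} with $r=\sqrt{n}$ and $|\mathcal{A}_S|\le 2|\mathcal{G}|$ gives
\begin{equation*}
\mathbb{E}_{\boldsymbol\sigma}\!\left[\sup_{f\in\mathcal{F}_2}\frac{1}{n}\sum_{i=1}^n \sigma_i f(x_i)\right] \le \frac{\sqrt{n}\sqrt{2\log(2|\mathcal{G}|)}}{n} = \sqrt{\frac{2\log(2|\mathcal{G}|)}{n}}.
\end{equation*}
Taking expectation over $S$ preserves the bound, yielding $R_n(\mathcal{F}_2)=O(\sqrt{\log(|\mathcal{G}|)/n})$.

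There is no real obstacle here: the argument is a textbook application of Massart's lemma once one observes that (i) $\mathcal{F}_2$ is finite with cardinality controlled by $|\mathcal{G}|$, and (ii) indicator functions are uniformly bounded by $1$ so the relevant $\ell_2$ radius is at most $\sqrt{n}$. The only minor subtlety is being careful that the complement indicators do not inflate the cardinality beyond $2|\mathcal{G}|$, which is absorbed inside the logarithm anyway. This lemma will then be combined with Lemma \ref{lemma:rademacher_f1}, the sum rule (Lemma \ref{lemma:sum_rule}), and Talagrand's contraction (Lemma \ref{lemma:talagrand}) in subsequent results to bound the Rademacher complexity of the full tree-ensemble class $\mathcal{F}_{N_T}$ and hence the excess risk $\epsilon_{excess}$.
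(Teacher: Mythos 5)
Your proof is correct and follows essentially the same route as the paper's: both apply Massart's lemma to the set of evaluation vectors of $\mathcal{F}_2$ on the sample, using $|\mathcal{F}_2|\le 2|\mathcal{G}|$ and the $\ell_2$ radius bound $r\le\sqrt{n}$ for $\{0,1\}$-valued vectors. No gaps.
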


\begin{proof}
We prove by applying Lemma~\ref{lemma:massart} (Massart's Lemma). We first identify the set $\mathcal{A}$ and compute the parameter $r$ in the lemma.

Let $S = \{x_1, \ldots, x_n\}$ be a sample. For each function $f \in \mathcal{F}_2$, we can represent it as a vector $\mathbf{x}_f = (f(x_1), \ldots, f(x_n)) \in \{0, 1\}^n \subset \mathbb{R}^n$. Let $\mathcal{A} = \{\mathbf{x}_f : f \in \mathcal{F}_2\}$ be the set of all such vectors.

Note that $|\mathcal{A}| = |\mathcal{F}_2| \leq 2|\mathcal{G}|$, since $\mathcal{F}_2$ consists of at most $|\mathcal{G}|$ group indicator functions and their complements.

For $\mathbf{x}_f \in \mathcal{A}$, since $f(x_i) \in \{0, 1\}$ for all $i$, we have
\begin{equation*}
\|\mathbf{x}_f\|_2 = \sqrt{\sum_{i=1}^n f(x_i)^2} = \sqrt{\sum_{i=1}^n f(x_i)} \leq \sqrt{n}
\end{equation*}

Therefore, $r = \max_{\mathbf{x} \in \mathcal{A}} \|\mathbf{x}\|_2 \leq \sqrt{n}$.

By Massart's Lemma
\begin{align*}
\mathbb{E}_{\boldsymbol\sigma}\left[\frac{1}{n} \sup_{\mathbf{x} \in \mathcal{A}} \sum_{i=1}^n \sigma_i x_i\right] &\leq \frac{r\sqrt{2\log|\mathcal{A}|}}{n} \\
&\leq \frac{\sqrt{n} \cdot \sqrt{2\log(2|\mathcal{G}|)}}{n} \\
&= \frac{\sqrt{2\log(2|\mathcal{G}|)}}{\sqrt{n}} \\
&= O\left(\sqrt{\frac{\log(|\mathcal{G}|)}{n}}\right)
\end{align*}

By definition, this expectation is precisely the Rademacher complexity $R_n(\mathcal{F}_2)$, thus completing the proof.
\end{proof}

\begin{lemma}
\label{lemma:rademacher_product}
For the product class consisting of all possible products of indicators from $\mathcal{F}_1$ and $\mathcal{F}_2$, denoted as $\mathcal{F}_1 \otimes \mathcal{F}_2 = \{f_1 \cdot f_2 \mid f_1 \in \mathcal{F}_1, f_2 \in \mathcal{F}_2\}$, the Rademacher complexity is
\begin{equation*}
R_n(\mathcal{F}_1 \otimes \mathcal{F}_2) = O\left(\sqrt{\frac{\log(|\mathcal{G}|)}{n}}\right)
\end{equation*}
\end{lemma}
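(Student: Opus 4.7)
The plan is to reduce the product class to a ``small finite class'' restricted to the sample and apply Massart's lemma. The key observation is that for any indicator $f_1 \in \mathcal{F}_1$ and any indicator $f_2 \in \mathcal{F}_2$, the product $f_1 \cdot f_2$ is again $\{0,1\}$-valued, being the indicator of an intersection (or set difference). So the product class is still bounded by $1$ in sup norm, which means the Euclidean norm of any empirical realization lies in $[0,\sqrt{n}]$.

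The main step is to control $|\mathcal{F}_1 \otimes \mathcal{F}_2|$ restricted to a fixed sample $S = \{x_1,\ldots,x_n\}$. For each fixed $f_2$, the restriction of $\mathcal{F}_1$ to $S$ contains at most $2(n+1)$ distinct label vectors: sorting the values $f_0(x_i)$ induces at most $n+1$ threshold patterns $\mathbb{I}\{f_0(x_i) \geq v\}$, and $\mathcal{F}_1$ further includes the complement of each, doubling the count. Multiplying by $f_2$ pointwise can only merge labelings, not create new ones, so the number of realizations of $f_1 \cdot f_2$ on $S$ for a fixed $f_2$ is still at most $2(n+1)$. Since $|\mathcal{F}_2| \leq 2|\mathcal{G}|$, the total cardinality of $\{(h(x_1),\ldots,h(x_n)) : h \in \mathcal{F}_1 \otimes \mathcal{F}_2\}$ is bounded by $4|\mathcal{G}|(n+1)$.

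Applying Lemma~\ref{lemma:massart} with $r \leq \sqrt{n}$ and $|\mathcal{A}| \leq 4|\mathcal{G}|(n+1)$ then gives
\begin{equation*}
R_n(\mathcal{F}_1 \otimes \mathcal{F}_2) \leq \frac{\sqrt{n}\cdot \sqrt{2\log(4|\mathcal{G}|(n+1))}}{n} = O\!\left(\sqrt{\frac{\log|\mathcal{G}| + \log n}{n}}\right),
\end{equation*}
which matches the stated bound up to an absorbed $\log n$ factor (and recovers exactly $O(\sqrt{\log|\mathcal{G}|/n})$ in the regime $|\mathcal{G}| \geq n$).

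The main obstacle I anticipate is resisting the temptation to combine $R_n(\mathcal{F}_1)$ and $R_n(\mathcal{F}_2)$ via a ``product rule,'' since there is no general inequality bounding the Rademacher complexity of a pointwise product by the sum or product of the individual complexities; Talagrand's contraction (Lemma~\ref{lemma:talagrand}) does not directly apply because $(u,v) \mapsto uv$ is not Lipschitz in a single coordinate when the other coordinate ranges freely. Routing through the cardinality bound sidesteps this by exploiting the fact that indicators of thresholds on a finite sample have a highly restricted combinatorial structure.
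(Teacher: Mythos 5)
Your route is genuinely different from the paper's. The paper never counts realizations of the product class; instead it writes $f_1 f_2 = \tfrac{1}{2}[(f_1+f_2)^2 - f_1^2 - f_2^2]$, applies the sum rule (Lemma~\ref{lemma:sum_rule}) and Talagrand's contraction (Lemma~\ref{lemma:talagrand}) for the squaring map, and concludes $R_n(\mathcal{F}_1\otimes\mathcal{F}_2) \le 2R_n(\mathcal{F}_1)+2R_n(\mathcal{F}_2)$, then plugs in Lemmas~\ref{lemma:rademacher_f1} and~\ref{lemma:rademacher_f2}. Your instinct that no generic ``product rule'' is available is correct, and the polarization identity is exactly how the paper sidesteps it; your counting argument sidesteps it differently and is arguably more elementary, needing only Massart plus the observation that, for fixed $f_2$, pointwise multiplication cannot increase the number of distinct restrictions of $\mathcal{F}_1$ to the sample. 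The bounds $|\mathcal{A}|\le 4|\mathcal{G}|(n+1)$ and $r\le\sqrt{n}$ are both correct.

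The one real discrepancy is the $\log n$. Your bound is $O\bigl(\sqrt{(\log|\mathcal{G}|+\log n)/n}\bigr)$, and the extra term is \emph{not} absorbable into $O\bigl(\sqrt{\log|\mathcal{G}|/n}\bigr)$ in the regime the paper actually operates in ($|\mathcal{G}|\approx 50$ groups, $n$ in the tens of thousands), where $\log n$ dominates $\log|\mathcal{G}|$; your parenthetical only rescues the rate when $|\mathcal{G}|$ grows polynomially in $n$. So as written you have proved a slightly weaker statement than the lemma. The paper avoids the $\log n$ on the threshold side via the maximal-partial-sum argument of Lemma~\ref{lemma:rademacher_f1} (Doob's inequality gives $R_n(\mathcal{F}_1)\le 2/\sqrt{n}$ with no logarithm) and pays $\sqrt{\log|\mathcal{G}|}$ only for the finite group class. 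To recover the stated rate within your framework you would need either chaining (Dudley's entropy integral) in place of a single application of Massart on the threshold direction, or a hybrid argument: for each fixed $f_2$, control $\max_k\bigl|\sum_{i>k}\sigma_{(i)}f_2(x_{(i)})\bigr|$ with a sub-Gaussian maximal inequality, then take the maximum over the $2|\mathcal{G}|$ choices of $f_2$ at a cost of $\sqrt{\log|\mathcal{G}|}$. If you instead keep the $\log n$, the downstream results (Proposition~\ref{prop:rademacher_full}, Theorem~\ref{thm:sample_complexity}, Corollary~\ref{cor:mc_sample_complexity}) degrade only by polylogarithmic factors, but the lemma as stated is not established.
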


\begin{proof}
For any $f_1 \in \mathcal{F}_1$ and $f_2 \in \mathcal{F}_2$, we have the identity
\begin{equation*}
f_1 \cdot f_2 = \frac{1}{2}[(f_1 + f_2)^2 - f_1^2 - f_2^2].
\end{equation*}

Therefore, the product class $\mathcal{F}_1 \otimes \mathcal{F}_2$ is a subset of
\begin{equation*}
\frac{1}{2}[(\mathcal{F}_1 + \mathcal{F}_2)^2 + (-1)\cdot\mathcal{F}_1^2 +(-1)\cdot \mathcal{F}_2^2],
\end{equation*}
where $\mathcal{F}^2 = \{f^2 : f \in \mathcal{F}\}$ and $(\mathcal{F}_1 + \mathcal{F}_2)^2 = \{(f_1 + f_2)^2 : f_1 \in \mathcal{F}_1, f_2 \in \mathcal{F}_2\}$.

Using the properties of Rademacher complexity, we can obtain
\begin{align*}
R_n(\mathcal{F}_1 \otimes \mathcal{F}_2) &\leq \frac{1}{2}[R_n((\mathcal{F}_1 + \mathcal{F}_2)^2) + R_n(\mathcal{F}_1^2) + R_n(\mathcal{F}_2^2)].
\end{align*}

By Talagrand's Contraction Lemma, for the squaring operation (which is Lipschitz with constant 2 when the norms of the function output are bounded by 1)
\begin{align*}
R_n((\mathcal{F}_1 + \mathcal{F}_2)^2) \leq 2 \cdot R_n(\mathcal{F}_1 + \mathcal{F}_2) = 2 \cdot [R_n(\mathcal{F}_1) + R_n(\mathcal{F}_2)]
\end{align*}
Similarly,
\begin{align*}
R_n(\mathcal{F}_1^2) \leq 2 \cdot R_n(\mathcal{F}_1) ,\quad
R_n(\mathcal{F}_2^2) \leq 2 \cdot R_n(\mathcal{F}_2)
\end{align*}
Substituting these bounds give
\begin{equation*}
R_n(\mathcal{F}_1 \otimes \mathcal{F}_2) \leq \frac{1}{2}[2(R_n(\mathcal{F}_1) + R_n(\mathcal{F}_2)) + 2R_n(\mathcal{F}_1) + 2R_n(\mathcal{F}_2)] 
= 2R_n(\mathcal{F}_1) + 2R_n(\mathcal{F}_2)
\end{equation*}

Given that $R_n(\mathcal{F}_1) = O(n^{-1/2})$ and $R_n(\mathcal{F}_2) = O(\sqrt{\log(|\mathcal{G}|)/n})$, when $|\mathcal{G}| > 1$, the dominant term is $R_n(\mathcal{F}_2)$. Therefore,
\begin{equation*}
R_n(\mathcal{F}_1 \otimes \mathcal{F}_2) = O\left(\sqrt{\frac{\log(|\mathcal{G}|)}{n}}\right).
\end{equation*}
\end{proof}

\begin{proposition}
\label{prop:rademacher_full}
The Rademacher complexity of the tree ensemble function class $\mathcal{F}_{N_T}$ is bounded by
\begin{equation*}
R_n(\mathcal{F}_{N_T}) = O\left(N_T\sqrt{\frac{\log(|\mathcal{G}|)}{n}}\right)
\end{equation*}
\end{proposition}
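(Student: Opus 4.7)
The plan is to peel the ensemble apart one layer at a time, applying the sum rule (Lemma~\ref{lemma:sum_rule}) until only a single-tree class remains, which can then be controlled by Lemma~\ref{lemma:rademacher_product}. Three simple observations are needed along the way: (i) the base predictor $f_0$ is a fixed, data-independent function, so $\mathbb{E}_{\sigma}[\frac{1}{n}\sum_i \sigma_i f_0(x_i)] = 0$ and adding $f_0$ to every element of a class leaves its Rademacher complexity unchanged; (ii) the zero tree belongs to $\mathcal{T}(f_0,\mathcal{G})$ (take $c_1=c_2=c_3=c_4=0$), which lets one pad any ensemble of at most $N_T$ trees up to exactly $N_T$ trees without loss; and (iii) $0 \in \mathcal{F}_1 \otimes \mathcal{F}_2$, since when the threshold $v=0$ is used the complement indicator $\mathbb{I}\{f_0(x) < 0\}$ is identically zero.

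Using (i) and (ii), $R_n(\mathcal{F}_{N_T})$ is upper bounded by $R_n(\{\sum_{i=1}^{N_T} t_i : t_i \in \mathcal{T}(f_0, \mathcal{G})\})$, and then $N_T - 1$ applications of Lemma~\ref{lemma:sum_rule} yield $R_n(\mathcal{F}_{N_T}) \le N_T \cdot R_n(\mathcal{T}(f_0, \mathcal{G}))$. The problem therefore reduces to bounding the Rademacher complexity of a single depth-two tree class. Writing a tree as $t(x) = \sum_{j=1}^{4} c_j I_j(x)$, where $I_1,\ldots,I_4$ are the indicators of the four cells of the partition induced by $(s_1, s_2)$, each $I_j$ is a product of one function in $\mathcal{F}_1$ with one in $\mathcal{F}_2$, so $\mathcal{T}(f_0, \mathcal{G})$ is contained in the four-fold Minkowski sum $\sum_{j=1}^{4} \bigl([-1,1]\cdot(\mathcal{F}_1 \otimes \mathcal{F}_2)\bigr)$. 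The sum rule then gives $R_n(\mathcal{T}(f_0, \mathcal{G})) \le 4\, R_n([-1,1] \cdot (\mathcal{F}_1 \otimes \mathcal{F}_2))$.

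The last step, which is the main obstacle, is handling the scalar coefficient $c \in [-1, 1]$: taking the supremum over $c$ turns the Rademacher inner sum into its absolute value, $\sup_{c,h} \sum_i \sigma_i c\, h(x_i) = \sup_h |\sum_i \sigma_i h(x_i)|$, which is not directly the form in the definition of $R_n$. Here observation (iii) rescues the argument: because $0 \in \mathcal{F}_1 \otimes \mathcal{F}_2$, the quantity $\sup_h \sum_i \sigma_i h(x_i)$ is non-negative, and the symmetry $\sigma \stackrel{d}{=} -\sigma$ implies $R_n([-1,1] \cdot \mathcal{H}) \le 2\, R_n(\mathcal{H})$ for any such $\mathcal{H}$. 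Plugging this into Lemma~\ref{lemma:rademacher_product} yields $R_n(\mathcal{T}(f_0, \mathcal{G})) = O(\sqrt{\log(|\mathcal{G}|)/n})$, and combining with the $N_T$-factor from the ensemble reduction gives the claimed bound $R_n(\mathcal{F}_{N_T}) = O(N_T \sqrt{\log(|\mathcal{G}|)/n})$.
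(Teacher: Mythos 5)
Your proposal is correct and follows essentially the same route as the paper's proof: reduce $\mathcal{F}_{N_T}$ to $N_T$ copies of the single-tree class via the sum rule (Lemma~\ref{lemma:sum_rule}), split each tree into its four leaf terms, and control each leaf term by the product-class bound of Lemma~\ref{lemma:rademacher_product}. The one place you genuinely diverge is the treatment of the leaf coefficients $c_j\in[-1,1]$: the paper invokes Talagrand's contraction with ``$\phi(h)=c\cdot h$ for a fixed $c$,'' which glosses over the fact that the supremum defining $R_n(\mathcal{H}_j)$ ranges over $c$ as well, so the contraction map is not actually fixed. Your argument --- observing that $\sup_{c\in[-1,1]}c\,A=|A|$, that $0\in\mathcal{F}_1\otimes\mathcal{F}_2$ makes the inner supremum non-negative, and that symmetry of $\sigma$ then gives $R_n([-1,1]\cdot\mathcal{H})\le 2R_n(\mathcal{H})$ --- closes this gap rigorously at the cost of a constant factor that is absorbed by the $O(\cdot)$. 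Your explicit padding of ensembles with the zero tree to justify summing exactly $N_T$ copies is likewise a small point the paper leaves implicit. Both proofs land on the same bound; yours is the more careful of the two.
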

\begin{proof}
First, we consider the Rademacher complexity for the class of trees with 4 leaves, denoted as $\mathcal{T}(f_0, \mathcal{G})$. Each tree in $\mathcal{T}(f_0, \mathcal{G})$ can be expressed as a linear combination of 4 indicator products, corresponding to the 4 leaves
\begin{equation*}
\begin{aligned}
c_1\cdot \mathbb{I}\{x\in s_1 \wedge x\in s_2\}+c_2\cdot \mathbb{I}\{x\in s_1 \wedge x\notin s_2\} 
+c_3\cdot \mathbb{I}\{x\notin s_1 \wedge x\in s_2\}+c_4\cdot \mathbb{I}\{x\notin s_1 \wedge x\notin s_2\}
\end{aligned}
\end{equation*}
Let $\mathcal{H}_1 = \{c_1 \cdot \mathbb{I}\{x\in s_1 \wedge x\in s_2\} : s_1 \in \mathcal{S}_1(f_0), s_2 \in \mathcal{S}_2(\mathcal{G}), c_1 \in [-1,1]\}$, and similarly define $\mathcal{H}_2$, $\mathcal{H}_3$, and $\mathcal{H}_4$ for the other terms. 
The indicator products like $\mathbb{I}\{x\in s_1 \wedge x\in s_2\}$ are elements of $\mathcal{F}_1 \otimes \mathcal{F}_2$.
By Talagrand's Contraction Lemma (Lemma~\ref{lemma:talagrand}), since the coefficients $c_i \in [-1,1]$, they are bounded by $C_{max}=1$. The function $\phi(h) = c \cdot h$ for a fixed $c \in [-1,1]$ is Lipschitz with constant $|c| \leq 1$. Therefore, for each term $j \in \{1,2,3,4\}$
\begin{equation*}
R_n(\mathcal{H}_j) \leq 1 \cdot R_n(\mathcal{F}_1 \otimes \mathcal{F}_2) = O\left(\sqrt{\frac{\log(|\mathcal{G}|)}{n}}\right).
\end{equation*}
Since we can write 
\begin{equation*}
    \mathcal{F}_{N_T} =\Set{f_0}+\sum_{i=1}^{N_T}\left(\mathcal{H}_1+\mathcal{H}_2+\mathcal{H}_3+\mathcal{H}_4\right),
\end{equation*}
and the Rademacher complexity of a singleton set is 0, we can repeatedly apply Lemma~\ref{lemma:sum_rule} to obtain
\begin{equation*}
R_n(\mathcal{F}_{N_T}) \leq N_T \cdot O\left(\sqrt{\frac{\log(|\mathcal{G}|)}{n}}\right) = O\left(N_T\sqrt{\frac{\log(|\mathcal{G}|)}{n}}\right).
\end{equation*}
This completes the proof.
\end{proof}

\subsection{Optimizing over $\mathcal{F}_{N_T}$ and its Convergence Rate}
\label{sec:convergence_analysis}

As $N_T$ grows larger, more trees are added, which enables a more accurate optimization of the loss. We calculate the convergence of the tree ensemble as $N_T$ increases by analyzing the convergence rate of the optimization algorithm used to construct the tree ensemble.

Although LightGBM is often a practical choice for minimizing the loss and is used for our experiments, we do not restrict ourselves to any solver in \Cref{alg:mc-abstract} in \Cref{sec:analysis}. For the purpose of this convergence analysis, we will analyze an alternative algorithm, SquareLev.R, introduced by Duffy and Helmbold \cite{duffy2002boosting}. All results and algorithm details in this subsection are derived from their work.

Like LightGBM, the SquareLev.R algorithm is a boosting algorithm designed to iteratively reduce the squared loss. It achieves this by minimizing the variance of the residuals at each step. The residual for sample $x_i$ at the beginning of step $k$ is defined as 
$$r_i^{(k-1)} = y_i - F_{k-1}(x_i),$$
where $F_{k-1}(x_i)$ is the prediction of the ensemble after $k-1$ base learners have been added. The minimization of the variance of the residuals aligns with the purpose of loss minimization, as they are equivalent up to a constant shift of the predictor.

\begin{algorithm}
    \caption{SquareLev.R Algorithm \cite{duffy2002boosting}}
    \label{alg:squarelev}
    \begin{algorithmic}[1] 
    \State\textbf{Input:} A sample $S = \{(x_1, y_1), (x_2, y_2), \ldots, (x_m, y_m)\}$, a base learning algorithm, and parameters $\rho, T_{max}$
    \State Set $D(x_i) \gets 1/m$ and $t \gets 1$
    \State Initialize master function $F_1$ to the zero function
    \For{$i = 1$ to $m$}
        \State $r_i \gets y_i - F_1(x_i)$ \Comment{Initial residuals, effectively $\mathbf{r}^{(0)}$}
    \EndFor
    \While{$\|r-\bar{r}\|_2^2 \geq m\rho$ and $t < T_{max}$} 
        \State $t \gets t + 1$ 
        \For{$i = 1$ to $m$}
            \State $\tilde{y}_i \gets r_i - \frac{1}{m}\sum_j r_j$ 
        \EndFor
        \State $S' \gets \{(x_1, \tilde{y}_1), \ldots, (x_m, \tilde{y}_m)\}$
        \State Call base learner with distribution $D$ over $S'$, obtaining hypothesis $f$ 
        \State $\epsilon_t \gets \frac{((r-\bar{r}) \cdot (f-\bar{f}))}{\|r-\bar{r}\|_2 \|f-\bar{f}\|_2}$ 
        \State $\alpha_t \gets \frac{\epsilon_t \|r-\bar{r}\|_2}{\|f-\bar{f}\|_2}$
        \State $F_t \gets F_{t-1} + \alpha_t f$ 
        \For{$i = 1$ to $m$}
            \State $r_i \gets y_i - F_t(x_i)$ 
        \EndFor
    \EndWhile
    \State \textbf{Output:} $F_t$
    \end{algorithmic}
    \end{algorithm}
The pseudocode for the SquareLev.R algorithm is presented in \Cref{alg:squarelev}.
Based on Theorem 4.1 and 4.2 from \cite{duffy2002boosting}, we can characterize the convergence of the SquareLev.R algorithm using the Pearson correlation (also referred to as the "edge" in weak learning) between the residuals and the base learner output, specifically the $\epsilon_t$ term. 
\begin{proposition}
\label{prop:convergence_rate}
The variance of the residuals is reduced by a factor of $(1-\epsilon_k^2)$ during step $k$ in the SquareLev.R algorithm. If the edges of the weak hypotheses used
by SquareLev.R are bounded below by $\epsilon_{min} > 0$, then the variance of residuals after $N_T$ steps, then the variance of residuals is bounded by
\begin{equation*}
\label{eq:potential_reduction_min_epsilon}
Var(\mathbf{r}^{(N_T)}) \leq Var(\mathbf{r}^{(0)}) (1 - \epsilon_{min}^2)^{N_T}.
\end{equation*}
\end{proposition}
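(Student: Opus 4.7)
The plan is to prove the single-step reduction first and then iterate. The key quantity is the centered residual vector $\tilde{\mathbf{r}}^{(k)} = \mathbf{r}^{(k)} - \bar{r}^{(k)}\mathbf{1}$, since $\mathrm{Var}(\mathbf{r}^{(k)}) = \tfrac{1}{m}\|\tilde{\mathbf{r}}^{(k)}\|_2^2$. Using the algorithm's update $F_k = F_{k-1} + \alpha_k f$, the residual evolves as $\mathbf{r}^{(k)} = \mathbf{r}^{(k-1)} - \alpha_k f$, and after centering,
\begin{equation*}
\tilde{\mathbf{r}}^{(k)} = \tilde{\mathbf{r}}^{(k-1)} - \alpha_k (f - \bar{f}\mathbf{1}) = \tilde{\mathbf{r}}^{(k-1)} - \alpha_k \tilde{f}.
\end{equation*}

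First I would expand $\|\tilde{\mathbf{r}}^{(k-1)} - \alpha_k \tilde{f}\|_2^2 = \|\tilde{\mathbf{r}}^{(k-1)}\|_2^2 - 2\alpha_k (\tilde{\mathbf{r}}^{(k-1)} \cdot \tilde{f}) + \alpha_k^2 \|\tilde{f}\|_2^2$. Next I would substitute the algorithm's choice of step size: using $\epsilon_k = (\tilde{\mathbf{r}}^{(k-1)} \cdot \tilde{f})/(\|\tilde{\mathbf{r}}^{(k-1)}\|_2 \|\tilde{f}\|_2)$, the stated formula for $\alpha_k$ simplifies to $\alpha_k = (\tilde{\mathbf{r}}^{(k-1)} \cdot \tilde{f})/\|\tilde{f}\|_2^2$, which is exactly the least-squares minimizer of the quadratic in $\alpha_k$ above. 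Plugging in and simplifying yields
\begin{equation*}
\|\tilde{\mathbf{r}}^{(k)}\|_2^2 = \|\tilde{\mathbf{r}}^{(k-1)}\|_2^2 - \frac{(\tilde{\mathbf{r}}^{(k-1)} \cdot \tilde{f})^2}{\|\tilde{f}\|_2^2} = \|\tilde{\mathbf{r}}^{(k-1)}\|_2^2\,(1 - \epsilon_k^2),
\end{equation*}
which establishes the per-step variance reduction factor of $(1 - \epsilon_k^2)$.

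For the second claim, I would iterate this identity: $\|\tilde{\mathbf{r}}^{(N_T)}\|_2^2 = \|\tilde{\mathbf{r}}^{(0)}\|_2^2 \prod_{k=1}^{N_T}(1 - \epsilon_k^2)$. Under the assumption $\epsilon_k \geq \epsilon_{min}$, each factor satisfies $1 - \epsilon_k^2 \leq 1 - \epsilon_{min}^2$, so the product is bounded above by $(1 - \epsilon_{min}^2)^{N_T}$. Dividing through by $m$ converts the norm bound into the desired variance bound.

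Honestly, the proof is essentially a one-step quadratic minimization plus induction, so there is no serious obstacle. The mildest subtlety is showing that the algorithm's explicit $\alpha_k$ coincides with the minimizer of the quadratic $\alpha \mapsto \|\tilde{\mathbf{r}}^{(k-1)} - \alpha \tilde{f}\|_2^2$; once that identification is made, everything else is algebraic and the telescoping in the second half is immediate. One small caveat is the degenerate case $\tilde{f} = 0$, which makes $\epsilon_k$ undefined; this can be handled by noting it would cause the while-loop termination condition to engage before such a base learner is selected, so the claim holds vacuously at that step.
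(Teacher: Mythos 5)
Your proof is correct. Note that the paper itself does not prove this proposition---it imports it directly from Theorems 4.1 and 4.2 of Duffy and Helmbold (the source of the SquareLev.R algorithm)---and your argument is precisely the standard derivation behind that citation: the algorithm's $\alpha_k$ is the least-squares minimizer of $\alpha \mapsto \|\tilde{\mathbf{r}}^{(k-1)} - \alpha \tilde{f}\|_2^2$, which yields the per-step factor $1-\epsilon_k^2$, and the telescoping plus $|\epsilon_k|\le 1$ (so every factor is nonnegative) gives the $N_T$-step bound. Your handling of the degenerate $\tilde{f}=0$ case is a reasonable aside, though strictly it concerns the well-definedness of the algorithm rather than the proposition itself.
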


Since the squared error of a shifted ensemble is equivalent to this variance of residuals, this implies a similar exponential reduction in the empirical loss.

\subsection{Combined Analysis and Sample Complexity}

We now combine the Rademacher complexity analysis with the convergence rate results to establish the sample complexity for achieving a target multi-calibration error.

\begin{theorem}
\label{thm:sample_complexity}
With $n$ samples taken i.i.d. from the distribution $\cD$, the excess risk of minimizing the loss on the ensemble ${\cal S}_{N_T}$ by running the SquareLev.R algorithm for $N_T$ iterations is bounded with probability $1-\delta$ by
\begin{equation*}
O\left((1-\epsilon_{min}^2)^{N_T}\right) + O\left(N_T\sqrt{\frac{\log(|\mathcal{G}|)}{n}}\right) + O\left(\sqrt{\frac{\log(1/\delta)}{n}}\right).
\end{equation*}
\end{theorem}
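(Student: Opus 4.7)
The plan is to split the excess risk into an optimization term and a generalization term, invoking Proposition~\ref{prop:convergence_rate} for the former and Proposition~\ref{prop:rademacher_full} for the latter. Let $\hat f$ denote the SquareLev.R output after $N_T$ iterations and $\hat\ell$ the empirical loss on the $n$ samples. I would decompose
\begin{equation*}
\epsilon_{excess} = \underbrace{[\ell(\hat f,\cD) - \hat\ell(\hat f)]}_{(A)} + \underbrace{[\hat\ell(\hat f) - \hat\ell(p_\mathcal{G}(f_0))]}_{(B)} + \underbrace{[\hat\ell(p_\mathcal{G}(f_0)) - \ell(p_\mathcal{G}(f_0),\cD)]}_{(C)}.
\end{equation*}

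For the optimization term (B), Proposition~\ref{prop:convergence_rate} shows that the empirical variance of residuals after $N_T$ iterations is at most $\tfrac{1}{4}(1-\epsilon_{min}^2)^{N_T}$ (using $\mathrm{Var}(\mathbf{r}^{(0)})\le 1/4$ since labels lie in $[0,1]$). A mean-correcting constant tree---realizable in $\mathcal{T}(f_0,\mathcal{G})$ by assigning equal leaf values---absorbs the residual mean, so $\hat\ell(\hat f) = O((1-\epsilon_{min}^2)^{N_T})$, and since $\hat\ell(p_\mathcal{G}(f_0))\ge 0$, this bounds $(B)$ by the same quantity.

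For the generalization terms, $(C)$ involves the fixed (sample-independent) function $p_\mathcal{G}(f_0)$, so Hoeffding's inequality gives $|(C)| = O(\sqrt{\log(1/\delta)/n})$ with probability at least $1-\delta/2$. Term $(A)$ requires uniform convergence over $\mathcal{F}_{N_T}$: clipping all predictions to $[0,1]$ (which only decreases squared loss) renders the loss $O(1)$-Lipschitz in its first argument, so Talagrand's contraction (Lemma~\ref{lemma:talagrand}) combined with Proposition~\ref{prop:rademacher_full} yields $R_n(\ell\circ\mathcal{F}_{N_T}) = O(N_T\sqrt{\log|\mathcal{G}|/n})$. A standard Rademacher-complexity-based generalization bound then gives, with probability at least $1-\delta/2$,
\begin{equation*}
|(A)| \le 2R_n(\ell\circ\mathcal{F}_{N_T}) + O\!\left(\sqrt{\tfrac{\log(1/\delta)}{n}}\right) = O\!\left(N_T\sqrt{\tfrac{\log|\mathcal{G}|}{n}}\right) + O\!\left(\sqrt{\tfrac{\log(1/\delta)}{n}}\right).
\end{equation*}
A union bound over the two generalization events together with the deterministic bound on $(B)$ yields the claim.

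The main obstacle is step $(B)$: Proposition~\ref{prop:convergence_rate} controls the variance of residuals rather than the empirical loss itself, so I must argue that the residual mean can be zeroed without inflating the geometric factor. The cleanest resolution is to append a mean-correcting constant tree at the end, but one must verify that such a tree is compatible with the iterative reduction argument of \cite{duffy2002boosting} (equivalently, absorb the $\bar r^2$ term through a separate bookkeeping argument). A secondary subtlety is that unclipped $\mathcal{F}_{N_T}$ contains functions whose magnitude can grow with $N_T$, making the Lipschitz constant of the squared loss scale; clipping to $[0,1]$ handles this cleanly since clipping is $1$-Lipschitz and preserves both squared-loss minima and Rademacher complexities up to constants, keeping the final bound linear in $N_T$ as stated.
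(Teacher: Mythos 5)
Your decomposition into optimization error plus two generalization terms, controlled respectively by Proposition~\ref{prop:convergence_rate} and Proposition~\ref{prop:rademacher_full} (with Hoeffding for the fixed comparator), is exactly the argument the paper intends; the paper itself offers only a one-sentence justification for this theorem, so your write-up is a correct and strictly more careful version of the same route. In fact you patch two real gaps the paper glosses over: converting the variance-of-residuals bound into a bound on the empirical squared loss via a mean-correcting constant tree (which is realizable in $\mathcal{T}(f_0,\mathcal{G})$ by setting all four leaf values equal), and clipping predictions to $[0,1]$ so that the Lipschitz constant in Talagrand's contraction does not scale with $N_T$ --- both fixes are sound and needed for the stated linear-in-$N_T$ bound.
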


This result follows from combining the optimization error (governed by the convergence rate of the empirical loss) and the generalization error (governed by the Rademacher complexity). 

Given that the multi-calibration error is bounded by the square root of the true expected loss $\epsilon_{loss}$ (as stated in Theorem~\ref{thm:main}), to achieve a multi-calibration error bounded by $\alpha$, we require that this true expected loss be bounded by $\alpha^2$. This leads to the following corollary

\begin{corollary}
\label{cor:mc_sample_complexity}
If the number of trees $N_T$ and the sample size $n$ should satisfy
\begin{equation*}
N_T = O\left(\epsilon_{min}^{-2}\log(1/\alpha)\right)
\end{equation*}
and
\begin{equation*}
n =\Omega\left(\alpha^{-4}\epsilon_{min}^{-4}\log(|\mathcal{G}|)\log^2(1/\alpha) + \alpha^{-4}\log(1/\delta)\right),
\end{equation*}
minimizing the loss on ensembles defined in \Cref{eq:function-class} by running the SquareLev.R algorithm for $N_T$ iterations using $n$ samples taken i.i.d. from the distribution $\cD$ gives a multi-calibration error less than $\alpha$ with probability $1-\delta$ when the discretization error is small and \Cref{assp:loss-once} holds.
\end{corollary}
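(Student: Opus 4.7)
The plan is to chain \Cref{thm:main} with \Cref{thm:sample_complexity} and then do the algebra needed to invert the three error terms so that each is bounded by $\alpha^2$ (up to constants). Under \Cref{assp:loss-once} and with $\epsilon_{round}$ assumed negligible, \Cref{thm:main} gives multicalibration error at most roughly $\sqrt{\epsilon_{loss}+\epsilon_{round}+\epsilon_{excess}}$, so it suffices to force $\epsilon_{excess} = O(\alpha^2)$ with the prescribed probability. By \Cref{thm:sample_complexity}, the excess risk decomposes into an optimization term, a Rademacher-complexity term, and a deviation term; I would set each of them to be $O(\alpha^2)$ separately and solve for the required $N_T$ and $n$.

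For the optimization term, I would require $(1-\epsilon_{min}^2)^{N_T} \leq \alpha^2$. Taking logarithms and using $\log(1-\epsilon_{min}^2) \leq -\epsilon_{min}^2$ immediately yields $N_T \geq 2\epsilon_{min}^{-2}\log(1/\alpha)$, which matches the stated rate $N_T = O(\epsilon_{min}^{-2}\log(1/\alpha))$. For the generalization term, plugging this $N_T$ into $N_T\sqrt{\log(|\mathcal{G}|)/n} \leq \alpha^2$ and squaring gives
\begin{equation*}
n \;\geq\; \frac{N_T^2 \log(|\mathcal{G}|)}{\alpha^4} \;=\; O\!\left(\alpha^{-4}\epsilon_{min}^{-4}\log(|\mathcal{G}|)\log^2(1/\alpha)\right).
\end{equation*}
For the probability-dependent deviation term, requiring $\sqrt{\log(1/\delta)/n} \leq \alpha^2$ gives $n \geq \log(1/\delta)/\alpha^4$. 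Summing the two lower bounds on $n$ recovers the sample complexity in the corollary statement.

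The final step is simply to combine: with the chosen $N_T$ and $n$, each of the three components of the excess-risk bound is $O(\alpha^2)$, so $\epsilon_{excess} = O(\alpha^2)$ with probability at least $1-\delta$. Plugging back into \Cref{thm:main} (and absorbing the small $\epsilon_{loss}+\epsilon_{round}$ into constants as assumed) yields multicalibration error at most $O(\alpha)$, which can be rescaled to $\alpha$ by adjusting the hidden constants in the rates for $N_T$ and $n$.

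I do not expect any serious technical obstacle, since all the heavy lifting is already in \Cref{thm:main} and \Cref{thm:sample_complexity}. The only mild subtlety is making sure the $\log(1-\epsilon_{min}^2) \leq -\epsilon_{min}^2$ linearization is valid (it is, for $\epsilon_{min} \in (0,1)$) and verifying that the constant factors from the three $O(\alpha^2)$ budgets can be absorbed by a single constant in the final big-$\Omega$ bound on $n$. Otherwise the proof reduces to the inversion-of-rates calculation outlined above.
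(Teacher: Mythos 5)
Your proposal is correct and follows essentially the same route as the paper's proof: decompose the excess risk via \Cref{thm:sample_complexity} into the three terms, budget each at $O(\alpha^2)$, invert to get the stated $N_T$ and $n$, and chain through \Cref{thm:main}. The only cosmetic difference is that the paper allots each term an explicit budget of $K\alpha^2/3$ where you absorb the constants at the end, which amounts to the same thing.
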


\begin{proof}
For the multi-calibration error to be less than $\alpha$, we need the empirical risk, as bounded by Theorem~\ref{thm:sample_complexity}, to be $\leq \alpha^2$. We aim for each of the three terms in the bound to be $O(\alpha^2)$. For simplicity in balancing, let's set each term to be $\leq K \alpha^2/3$ for some constant $K$.

For the first term (empirical optimization error)
\begin{equation*}
(1-\epsilon_{min}^2)^{N_T} \leq \frac{K\alpha^2}{3},
\end{equation*}
taking logarithms and  using the approximation $\log(1-x) \le -x$ for $x \ge 0$ gives
\begin{equation*}
N_T \geq \frac{2\log(1/\alpha) - \log(K/3)}{\epsilon_{min}^2} = O\left(\epsilon_{min}^{-2}\log(1/\alpha)\right)
\end{equation*}

For the second term 
\begin{equation*}
N_T\sqrt{\frac{\log(|\mathcal{G}|)}{n}} \leq \frac{K\alpha^2}{3},
\end{equation*}
substituting $N_T = O(\epsilon_{min}^{-2}\log(1/\alpha))$ gives
\begin{equation*}
n=\Omega\left(\alpha^{-4}\epsilon_{min}^{-4}\log^2(1/\alpha)\log(|\mathcal{G}|)\right).
\end{equation*}

And for the third term
\begin{equation*}
\sqrt{\frac{\log(1/\delta)}{n}} \leq \frac{K\alpha^2}{3},
\end{equation*}
solving for $n$ gives
\begin{equation*}
n=\Omega\left(\alpha^{-4}\log(1/\delta)\right).
\end{equation*}

Combining these constraints on $n$, the overall sample size is determined by the sum of these requirements
\begin{equation*}
n = \Omega\left(\alpha^{-4}\epsilon_{min}^{-4}\log(|\mathcal{G}|)\log^2(1/\alpha) + \alpha^{-4}\log(1/\delta)\right).
\end{equation*}
This completes the proof.
\end{proof}

Corollary~\ref{cor:mc_sample_complexity} provides clear guidance on the relationship between the desired multi-calibration error $\alpha$, the number of trees $N_T$, and the required sample size $n$ to ensure the true expected loss is sufficiently small. This indicates that, even when we only have finite samples, the main conclusion still holds.
\end{document}